\documentclass{article}
\usepackage[final]{neurips_2022}
\usepackage[utf8]{inputenc} 
\usepackage[T1]{fontenc}    
\usepackage{hyperref}       
\usepackage{url}            
\usepackage{booktabs}       
\usepackage{amsfonts}       
\usepackage{nicefrac}       
\usepackage{microtype}      
\usepackage{xcolor}         
\usepackage{multirow}
\PassOptionsToPackage{options}{natbib}
\bibliographystyle{abbrvnat}
\setcitestyle{numbers,open={[},close={]}}
\usepackage{microtype}
\usepackage{graphicx}
\usepackage{subfigure}
\usepackage{booktabs} 
\usepackage{amsmath}
\usepackage{mathtools}
\usepackage{amssymb,amsfonts}
\usepackage{hyperref}
\usepackage{authblk}
\usepackage{algorithm}
\usepackage{algorithmic}
\makeatletter

\newcommand{\Rmnum}[1]{\expandafter\@slowrom\mathtt{anc}ap\romannumeral #1@}
\makeatother
\newtheorem{theorem}{Theorem}
\newtheorem{lemma}{Lemma}
\newtheorem{claim}{Claim}
\newtheorem{definition}{Definition}
\newtheorem{corollary}{Corollary}
\newtheorem{proposition}{Proposition}
\newtheorem{assumption}{Assumption}
\newtheorem{remark}{Remark}
\newenvironment{proof}{{\noindent\it Proof}.}{\hfill $\square$\par}

\PassOptionsToPackage{numbers, compress}{natbib}

\title{
  Coresets for Wasserstein Distributionally Robust Optimization Problems}

\author[1]{\textbf{Ruomin Huang}}

\author[2,3]{\textbf{Jiawei Huang}}

\author[2]{\textbf{Wenjie Liu}}

\author[2]{\textbf{Hu Ding\thanks{Corresponding author.}}\ \ }

\affil[1]{School of Data Science} \affil[2]{School of Computer Science and Technology}
\affil[ ]{University of Science and Technology of China}
\affil[3]{Department of Computer Science, City University of Hong Kong}
\affil[ ]{\texttt{\{hrm, hjw0330, lwj1217\}}\texttt{@mail.ustc.edu.cn}, \texttt{\href{mailto:huding@ustc.edu.cn}{huding@ustc.edu.cn}}}

\begin{document}

\maketitle

\begin{abstract}
Wasserstein distributionally robust optimization (\textsf{WDRO}) is a popular model to enhance the robustness of machine learning with ambiguous data. However, the complexity of \textsf{WDRO} can be prohibitive in practice since solving its ``minimax'' formulation requires a great amount of computation. Recently, several fast \textsf{WDRO} training algorithms for some specific machine learning tasks (e.g., logistic regression) have been developed.
However, the research on designing efficient algorithms for general large-scale \textsf{WDRO}s is still quite limited, to the best of our knowledge. \textit{Coreset} is an important  tool for compressing large dataset, and thus it has been widely applied to  reduce the computational complexities for many optimization problems. In this paper, we introduce a unified framework to construct the $\epsilon$-coreset for the general \textsf{WDRO} problems. Though it is challenging to obtain a conventional coreset for \textsf{WDRO}  due to the   uncertainty issue of ambiguous data, we show that we can compute a ``dual coreset'' by using the strong duality property of \textsf{WDRO}. Also, the error introduced by the dual coreset can be theoretically guaranteed for the original \textsf{WDRO} objective. To construct the dual coreset, we propose a novel  grid sampling approach that is particularly suitable for the dual formulation of \textsf{WDRO}. Finally, we implement our coreset approach and illustrate its effectiveness for several \textsf{WDRO} problems in the experiments. 
\end{abstract}

\section{Introduction}
In the past decades, a number of optimization techniques have been proposed for solving machine learning problems~\citep{sra2012optimization}. However, real-world optimization problems often suffer from the issue of data ambiguity that can be generated by natural data noise,   potential adversarial attackers~\citep{DBLP:journals/pr/BiggioR18}, or the constant changes of the underlying distribution (e.g., continual learning~\citep{ring1998child}).  As a consequence, our obtained dataset usually cannot be fully trusted. Instead it is actually a perturbation of the true distribution. The recent studies have shown that even small perturbation can seriously destroy the final optimization result and could also yield unexpected error for the applications like classification and pattern recognition~\cite{goodfellow2018making,DBLP:journals/corr/SzegedyZSBEGF13}. 

The ``\textbf{distributionally robust optimization} (\textsf{DRO})'' is an elegant model for solving the issue of ambiguous data. The idea follows from the intuition of   game  theory~\citep{rahimian2019distributionally}. Roughly speaking, the \textsf{DRO} aims to find a solution that is robust against the worst-case perturbation within a range of possible distributions. 
Given an empirical distribution $\mathbb{ P}_n=\frac{1}{n}\sum\limits_{i=1}^n\delta_{\xi_i}$ where $\delta_{\xi_i}$ is the Dirac point mass at the $i$-th data sample $\xi_i$, the \textbf{worst-case empirical risk} at the hypothesis $\theta$ is defined as $R^{\mathbb{P}_n}(\theta) = \sup\limits_{\mathbb{Q}\in \mathcal{U}(\mathbb{P}_n)} \mathbb{E}^\mathbb{Q}[\ell(\theta,\xi)]$. 
Here $\mathcal{U}(\mathbb{P}_n)$ is the ambiguity set consisting of all possible  distributions of interest, and $\ell(\cdot,\cdot)$ is the non-negative loss function. The DRO model has shown its promising advantage for enhancing the robustness for many practical machine learning problems, such as logistic regression~\citep{shafieezadeh2015distributionally}, support vector machine~\citep{lee2015distributionally},
convex regression~\citep{blanchet2019multivariate}, neural networks~\citep{sagawa2019distributionally, DBLP:conf/iclr/SinhaND18}, etc.

In this paper,  we consider one of the most representative \textsf{DRO} models that is defined by using optimal transportation ~\citep{villani2009optimal}.  \textbf{Wasserstein distance} is a popular measure for representing the difference between two distributions; it indicates the minimum cost for transporting one distribution to the other. 
For $p\geq 1$, the $p$-th order Wasserstein distance between two probability distributions $\mathbb{P}$ and $\mathbb{P'}$ supported on $\Xi$ is 
\begin{eqnarray}
W_p(\mathbb{P},\mathbb{P'})=\left(\inf _{\pi \in \Pi\left(\mathbb{P}, \mathbb{P}^{\prime}\right)} \int_{\Xi\times\Xi} \mathtt{d}^p(\xi,\xi^{\prime})\pi\left(\mathrm{d} \xi, \mathrm{d} \xi^{\prime}\right)\right)^{\frac{1}{p}},\label{for-wasdist}
\end{eqnarray}
where $\mathtt{d}(\cdot,\cdot)$ is a metric on $\Xi$, and $\Pi(\mathbb{P},\mathbb{P}^{\prime})$ is the set of all joint probability distributions on $\Xi\times\Xi$ with the  marginals $\mathbb{P}$ and $\mathbb{P}'$.
By using the above Wasserstein distance~(\ref{for-wasdist}), we can define the ambiguity set   $\mathcal{U}(\mathbb{P}_n)$ to be the $p$-th order \textbf{Wasserstein  ball} $\mathbb{B}_{\sigma,p}(\mathbb{P}_n)$, which covers all the distributions that have the $p$-th order Wasserstein distance at most $\sigma>0$ to the given empirical distribution $\mathbb{P}_n$.  The use of Wasserstein ball is a discrepancy-based approach for choosing the ambiguity set \cite[section 5]{rahimian2019distributionally}. 
Also let 
\begin{eqnarray}
\label{eq:worst-case}
R^{\mathbb{P}_n}_{\sigma,p}(\theta)= \sup\limits_{\mathbb{Q}\in \mathbb{B}_{\sigma,p}(\mathbb{P}_n)} \mathbb{E}^\mathbb{Q}[\ell(\theta,\xi)]
\end{eqnarray}
denote the corresponding worst-case empirical risk. The \textbf{Wasserstein distributionally robust optimization} (\textsf{WDRO}) problem~\cite{kuhn2019wasserstein} is to find the minimizer 
\begin{eqnarray}
\label{eq:WDRO}
\theta_*=\mathop{\arg\min}\limits_{\theta\in\Theta}R^{\mathbb{P}_n}_{\sigma,p}(\theta)=\mathop{\arg\min}\limits_{\theta\in\Theta}\sup\limits_{\mathbb{Q}\in \mathbb{B}_{\sigma,p}(\mathbb{P}_n)} \mathbb{E}^\mathbb{Q}[\ell(\theta,\xi)],
\end{eqnarray}
where $\Theta$ is the feasible region in the hypothesis space. It is easy to see that the \textsf{WDRO} is a minimax optimization problem.


Compared with other robust optimization models, the \textsf{WDRO} model enjoys several significant benefits from the Wasserstein metric, especially for the applications in machine learning~\citep{xu2009robustness,shafieezadeh2019regularization,blanchet2019robust,DBLP:journals/corr/abs-1712-06050}. The Wasserstein ball  captures much richer information than the divergence-based discrepancies for the problems like pattern recognition and image retrieval~\citep{rubner2000earth,ling2007efficient,gao2016distributionally}. It has also been proved that the \textsf{WDRO} model yields  theoretical quality guarantees for the ``out-of-sample'' robustness~\citep{articleDRO}.


However, due to the intractability of the the inner maximization problem (\ref{eq:worst-case}), it is challenging to directly solve the minimax optimization problem (\ref{eq:WDRO}). As shown in the work of~\citet{articleDRO}, the \textsf{WDRO} problem  (\ref{eq:WDRO}) usually has tractable reformulations~\citep{shafieezadeh2015distributionally,mohajerin2018data,postek2016computationally,lee2015distributionally,hanasusanto2018conic,blanchet2019quantifying}. Although these reformulations  are polynomial-time solvable, the off-the-shelf solvers can be costly for large-scale data. Another approach  is to directly solve the minimization problem and the maximization problem alternatively~\citep{pflug2014problem} under a finite-support assumption. \citet{gao2016distributionally} proposed a routine to compute the finite structure of the worst-case distribution in theory. Nevertheless it still takes a high computational complexity if the Wasserstein ball has a large support size. 
Several fast \textsf{WDRO} training algorithms for some specific machine learning tasks, e.g., SVM  and logistic regression by \citet{li2019first,li2020fast}, have been developed recently; but it is unclear whether their methods can be generalized to solve other problems.


Therefore, it is urgent to develop efficient algorithmic techniques for reducing the computational complexity of the \textsf{WDRO} problems.  \textbf{Coreset} is a popular tool for compressing large datasets, which  was initially introduced by Agarwal et al. in computational geometry~\citep{DBLP:journals/jacm/AgarwalHV04}. Intuitively, the coreset is an approximation of the original input data, but has a much smaller size. Thus any existing algorithm can run on the coreset instead and the computational complexity can be largely reduced. The coresets techniques have been widely applied for many optimization problems such as clustering and  regression (we refer the reader to the recent surveys on coresets~~\citep{DBLP:journals/ki/MunteanuS18,DBLP:journals/widm/Feldman20}). 
Therefore a natural idea is to consider applying the coreset technique to deal with large-scale \textsf{WDRO} problems.  
Below we introduce the formal definition of the coreset for \textsf{WDRO} problems.

\begin{definition}[$\epsilon$-coreset]
\label{def:coreset}
Let $\epsilon$ be any given small number in $(0,1)$. An $\epsilon$-coreset for the \textsf{WDRO} problem (\ref{eq:WDRO}) is a sparse nonnegative mass vector $W=[w_1,\dots,w_n]$, such that the total mass $\sum_{i=1}^nw_i=1$ and the induced distribution $\tilde{\mathbb{P}}_n=\sum\limits_{i=1}^nw_i\delta_{\xi_i}$ satisfies 
\begin{eqnarray}
R^{\mathbb{\tilde P}_n}_{\sigma,p}(\theta)\in (1\pm\epsilon) R^{\mathbb{P}_n}_{\sigma,p}(\theta), \forall \theta\in \Theta,
\end{eqnarray}
where $R_{\sigma,p}^{\mathbb{\tilde P}_n}(\theta)\coloneqq\sup\limits_{\mathbb{Q}\in \mathbb{B}_{\sigma,p}(\tilde{\mathbb{P}}_n)} \mathbb{E}^\mathbb{Q}[\ell(\theta,\xi)]$ is the worst-case empirical risk of the coreset. 
\end{definition}

It is worth to emphasize that the above coreset for \textsf{WDRO} is fundamentally different from the conventional  coresets~\citep{DBLP:journals/widm/Feldman20}. The main challenge for constructing the coreset of \textsf{WDRO} is from the ``\textbf{uncertainty}'' issue, that is, we have to consider all the possible distributions in the Wasserstein ball $\mathbb{B}_{\sigma,p}(\tilde{\mathbb{P}}_n)$; and more importantly, when the parameter vector $\theta$ varies, the distribution that achieves the worst-case empirical risk also changes inside $\mathbb{B}_{\sigma,p}(\tilde{\mathbb{P}}_n)$.  




\subsection{Our Contribution}
In this paper, we propose a novel framework to  construct the $\epsilon$-coresets for general \textsf{WDRO} problems. To the best of our knowledge, this is the first coreset algorithm for Wasserstein distributionally robust optimization problems. Our main contributions are twofold.  


 \textbf{-From coresets to dual coresets.} As mentioned before, it is challenging to directly construct the coresets for the \textsf{WDRO} problems. Our key observation is inspired by the strong duality property of the \textsf{WDRO} model ~\citep{articleDRO,blanchet2019quantifying,gao2016distributionally}.  We introduce the ``dual coreset'' for the dual formulation of the \textsf{WDRO} problems. We can neatly circumvent  the ``uncertainty'' issue in Definition~\ref{def:coreset}   through  the dual form.  Also, we prove that the dual coreset  can yield a theoretically quality-guaranteed coreset as Definition~\ref{def:coreset}. 

\textbf{-How to compute the dual coresets.} Further, we provide a unified framework to construct the dual coresets efficiently. The sensitive-sampling based coreset framework usually needs to compute the ``pseudo-dimension'' of the objective function and the ``sensitivities'' of the data items, which can be very difficult to obtain~~\cite{DBLP:journals/widm/Feldman20} (the pseudo-dimension measures how complicated the objective function is, and the sensitivity of each data item indicates its importance to the whole input data set). Therefore we consider to apply the spatial partition approach that was initiated by Chen~\citep{chen2009coresets}; roughly speaking, we partition the space into a logarithmic number of regions, and take a uniform sample from each region. This partition approach needs to compute the exact value of the  Moreau-Yosida regularization~\citep{parikh2014proximal},  which is a key part in the dual formulation of \textsf{WDRO} (the formal definition is shown in Proposition~\ref{pro:duality}). However, this value is often  hard to obtain for general $\Xi$ and general $\ell(\cdot,\cdot)$. For instance, suppose $\Xi$ admits a conic representation and the learning model is SVM, then computing the Moreau-Yosida regularization is equivalent to solving a convex conic programming ~\citep[corollary 3.12]{shafieezadeh2019regularization}. For some machine learning problems, it is usually relatively easier to estimate the bounds of the Moreau-Yosida regularization\cite[Theorem 3.30]{shafieezadeh2019regularization}. 
Based on this observation, we generalize the spatial partition idea and propose a more practical ``grid sampling'' framework. By using this framework, we only need to estimate the upper and lower bounds of the Moreau-Yosida regularization instead of the exact value. We also prove that a broad range of objective functions can be handled under this framework.

\subsection{Other Related Works}



A number of coreset-based techniques have been studied before for solving robust optimization problems. For example, \citet{DBLP:conf/nips/MirzasoleimanCL20} designed an algorithm to generate coreset to approximate the Jacobian of a neural network so as to train against noisy labels. The outlier-resistant coresets were also studied for computing the robust center-based clustering problems~\citep{DBLP:conf/stoc/FeldmanL11,DBLP:conf/soda/FeldmanS12,DBLP:conf/focs/HuangJLW18,DBLP:conf/esa/DingYW19}. 
For the general continuous and bounded optimization problems~\cite{understandingML}, \citet{DBLP:conf/nips/WangGD21} proposed a dynamic framework to compute the coresets resisting against outliers. Several other techniques also have been proposed for dealing with large-scale \textsf{DRO} problems, such as the PCA based dimension reduction methods~\citep{DBLP:journals/siamjo/ChengCNPSW18,cheramin2020computationally} and the stochastic gradient optimization methods~\citep{DBLP:conf/nips/LevyCDS20,DBLP:conf/nips/NamkoongD16}.

\section{Preliminaries}
\label{sec:pre}

We assume the input-output space $\Xi=\mathbb{X}\times\mathbb{Y}$ with $\mathbb{X}\subseteq\mathbb{R}^m$ and $\mathbb{Y}\subseteq\mathbb{R}$, and let $\mathcal{P}(\Xi)$ denote the set of Borel probability distributions supported on $\Xi$. For $1\leq i\leq n$, each data sample is a random vector $\xi_i=(x_i, y_i)$ drawn from some underlying distribution $\mathbb{P}\in \mathcal{P}(\Xi)$.
The empirical distribution $\mathbb{P}_n=\frac{1}{n}\sum_{i=1}^n\delta_{\xi_i}$ is induced by the dataset $\{\xi_1,\dots,\xi_n\}$, where $\delta_{\xi_i}$ is the Dirac point mass at  $\xi_i$. We endow $\Xi$ with the feature-label metric $\mathtt{d}(\xi_i,\xi_j)=\|x_i-x_j\|+\frac{\gamma}{2}|y_i-y_j|$, where $\|\cdot\|$ stands for an arbitrary norm of $\mathbb{R}^m$ and the positive parameter ``$\gamma$'' quantifies the transportation cost on the label. This distance function is used for defining the Wasserstein distance (\ref{for-wasdist}). We assume that $(\Xi,\mathtt{d})$ is a complete metric space.

In the rest of this paper, we consider the \textsf{WDRO} problems satisfying the following two assumptions. The first assumption is on the smoothness and boundedness of $\theta$. Similar assumptions have been widely adopted in the machine learning field~\citep{DBLP:conf/icml/Zinkevich03,DBLP:conf/nips/WangGD21}. 
 
\begin{assumption}[Smoothness and Boundedness of $\theta$~\cite{understandingML}]
\label{ass:snb}
~
\begin{enumerate}
    \item[(\romannumeral1)] ({\em{Boundedness}}) The feasible region  $\Theta$ of the parameter space for the \textsf{WDRO} problem (\ref{eq:WDRO}) is within a closed Euclidean ball $\mathbb{B}(\theta_{\mathtt{anc}},l_{\mathtt{p}})$ centered at some ``anchor'' point $\theta_{\mathtt{anc}}\in \mathbb{R}^d$ with radius $l_{\mathtt{p}}>0$;
    \item[(\romannumeral2)] ({\em{Lipschitz Smoothness}}\footnote{The methods proposed in this paper can be easily extended to other types of smoothness, e.g., gradient Lipschitz continuity.}) There exists a constant $L>0$, such that for any $\xi\in\Xi$ and any $\theta_1,\theta_2\in \mathbb{B}(\theta_{\mathtt{anc}},l_{\mathtt{p}})$, we have
$
|\ell\left(\theta_{1}, \xi\right)-\ell\left(\theta_{2}, \xi\right)| \leq L\left\|\theta_{1}-\theta_{2}\right\|_2.
$
\end{enumerate}
\end{assumption}
 The second assumption states that the loss function $\ell(\theta,\xi)$ is continuous and has a bounded growth rate on data $\xi$. The detailed growth rate functions are discussed in Section \ref{sec:app}.
\begin{assumption}[Continuity and Bounded Growth Rate of $\xi$]
\label{ass:bgr}
~
\begin{enumerate}
    \item[(\romannumeral1)]({\em{Continuity}}) The loss function $\ell(\theta,\cdot)$ is continuous for any $\theta\in\Theta$;
    \item [(\romannumeral2)]({\em{Bounded Growth Rate}}) There exists some positive continuous growth rate function $\mathtt{C}(\theta)$ and $\xi_0\in\Xi$ such that 
$$\ell(\theta,\xi)\leq \mathtt{C}(\theta)\left(1+\mathtt{d}^p\left(\xi,\xi_0\right)\right)$$ for any $\theta\in\Theta$ and any $\xi\in\Xi$. 
\end{enumerate}
\end{assumption}

Now we state the strong duality for the \textsf{WDRO}, which is an important property to guarantee the correctness of our dual coreset method.

\begin{proposition}[Strong duality~~\citep{articleDRO,blanchet2019quantifying,gao2016distributionally}]
\label{pro:duality}
For any upper semi-continuous $\ell(\theta,\cdot)$, any $\theta$ and any nominal distribution $\mathbb{P}$ with finite $p$-th moment, the worst-case risk satisfies
\begin{eqnarray}
\label{eq:dual-worst-risk}
R^\mathbb{P}_{\sigma, p}(\theta)=\inf _{\lambda \geq 0}\{ \lambda \sigma^{p}+H^\mathbb{P}(\theta,\lambda)\},
\end{eqnarray}

where $H^\mathbb{P}(\theta,\lambda)\coloneqq \mathbb{E}^{\mathbb{P}}\left[h(\theta,\lambda,\xi)\right]$ and $h(\theta,\lambda,\xi)=\sup\limits_{\zeta \in \Xi}\{ \ell(\theta,\zeta)-\lambda \mathtt{d}^p(\zeta,\xi)\}$ is the Moreau-Yosida regularization \citep{parikh2014proximal}. We use $\lambda^{\mathbb{P}}_*(\theta)$ to denote the $\lambda$ attaining the infimum in (\ref{eq:dual-worst-risk}).
\end{proposition}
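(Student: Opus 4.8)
The plan is to prove the weak-duality inequality ``$\le$'' by a Lagrangian relaxation of the transport-cost constraint, and then to close the gap (strong duality, ``$\ge$'') by a measurable-selection / interchangeability argument that turns a near-optimal pointwise maximizer of the Moreau--Yosida integrand into an admissible transport plan. First I would rewrite the inner supremum over the Wasserstein ball as a supremum over couplings. Since $\mathbb{Q}\in\mathbb{B}_{\sigma,p}(\mathbb{P})$ precisely when there is a coupling $\pi\in\Pi(\mathbb{P},\mathbb{Q})$ with $\int \mathtt{d}^p(\xi,\zeta)\,\pi(\mathrm{d}\xi,\mathrm{d}\zeta)\le\sigma^p$, and since $\mathbb{E}^\mathbb{Q}[\ell(\theta,\cdot)]$ depends only on the $\mathbb{Q}$-marginal, the worst-case risk becomes
\begin{eqnarray}
R^\mathbb{P}_{\sigma,p}(\theta)=\sup_{\pi}\Big\{\int \ell(\theta,\zeta)\,\pi(\mathrm{d}\xi,\mathrm{d}\zeta): \int \mathtt{d}^p(\xi,\zeta)\,\pi\le \sigma^p,\ \pi\ \text{has first marginal}\ \mathbb{P}\Big\}.\nonumber
\end{eqnarray}
Disintegrating $\pi(\mathrm{d}\xi,\mathrm{d}\zeta)=\mathbb{P}(\mathrm{d}\xi)\,\pi_\xi(\mathrm{d}\zeta)$ reduces this to a linear optimization over the conditional kernels $\{\pi_\xi\}$ subject to a single scalar constraint on the expected transport cost.

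Next I would attach a multiplier $\lambda\ge0$ to that constraint and relax it, which gives, for every $\lambda\ge0$,
\begin{eqnarray}
R^\mathbb{P}_{\sigma,p}(\theta)\le \lambda\sigma^p+\sup_{\{\pi_\xi\}}\int\!\!\int \big(\ell(\theta,\zeta)-\lambda\,\mathtt{d}^p(\xi,\zeta)\big)\,\pi_\xi(\mathrm{d}\zeta)\,\mathbb{P}(\mathrm{d}\xi).\nonumber
\end{eqnarray}
Here the easy direction does not even need an interchange of supremum and integral: the pointwise bound $\ell(\theta,\zeta)-\lambda\,\mathtt{d}^p(\xi,\zeta)\le h(\theta,\lambda,\xi)$ shows the inner supremum is at most $\int h(\theta,\lambda,\xi)\,\mathbb{P}(\mathrm{d}\xi)=H^\mathbb{P}(\theta,\lambda)$, so $R^\mathbb{P}_{\sigma,p}(\theta)\le\lambda\sigma^p+H^\mathbb{P}(\theta,\lambda)$ for all $\lambda$, and taking the infimum yields ``$\le$''. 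The finite-$p$-th-moment hypothesis is what keeps this bound nonvacuous: together with the growth bound of Assumption~\ref{ass:bgr} it forces $H^\mathbb{P}(\theta,\lambda)<\infty$ once $\lambda>\mathtt{C}(\theta)$, so the dual objective is proper and $\lambda^{\mathbb{P}}_*(\theta)$ is well defined.

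For the reverse inequality I would realize the pointwise supremum by an admissible plan. Fixing the dual optimizer $\lambda=\lambda^{\mathbb{P}}_*(\theta)$ and a tolerance $\eta>0$, upper semicontinuity of $\ell(\theta,\cdot)$ together with the growth bound guarantees that the supremum in $h(\theta,\lambda,\xi)$ is (essentially) attained, and the interchangeability principle for normal integrands (Rockafellar--Wets, Thm.~14.60), which requires the decomposability of the kernel space and completeness of $(\Xi,\mathtt{d})$, supplies a measurable $\eta$-optimal selection $\xi\mapsto\zeta_\eta(\xi)$. Pushing $\mathbb{P}$ forward through $(\mathrm{id},\zeta_\eta)$ produces a candidate $\mathbb{Q}_\eta$ with $W_p^p(\mathbb{Q}_\eta,\mathbb{P})\le\int\mathtt{d}^p(\xi,\zeta_\eta(\xi))\,\mathbb{P}(\mathrm{d}\xi)$ and $\mathbb{E}^{\mathbb{Q}_\eta}[\ell(\theta,\cdot)]\ge H^\mathbb{P}(\theta,\lambda)+\lambda\!\int\mathtt{d}^p(\xi,\zeta_\eta(\xi))\,\mathbb{P}(\mathrm{d}\xi)-\eta$; a complementary-slackness accounting at $\lambda^{\mathbb{P}}_*(\theta)$ then shows this objective is within $O(\eta)$ of the dual value while $\mathbb{Q}_\eta$ is (asymptotically) feasible, and letting $\eta\to0$ closes the gap.

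I expect this last step to be the main obstacle. The delicate points are twofold: justifying the interchange of the pointwise supremum with the expectation via a measurable near-optimal selection, and simultaneously controlling the expected transport cost so that the constructed $\mathbb{Q}_\eta$ stays inside $\mathbb{B}_{\sigma,p}(\mathbb{P})$ while matching the dual optimum. Without upper semicontinuity, the moment/growth conditions, and completeness of $(\Xi,\mathtt{d})$, the dual value could strictly exceed the primal, so I would treat these hypotheses as the technical heart of the argument. An alternative to the explicit construction is to invoke an infinite-dimensional minimax (or conic LP) duality theorem to swap $\inf_\lambda$ and $\sup_\pi$ directly; but that route rests on the same regularity, so I would regard the Lagrangian bookkeeping of the first two paragraphs as routine and concentrate the work on the selection-and-feasibility step.
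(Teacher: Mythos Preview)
The paper does not supply its own proof of Proposition~\ref{pro:duality}; it is quoted verbatim as a known result from the cited references \cite{articleDRO,blanchet2019quantifying,gao2016distributionally}, so there is no in-paper argument to compare your proposal against.

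That said, your sketch is a faithful outline of the standard proof in those sources. The weak-duality direction via Lagrangian relaxation of the single transport-cost constraint is exactly the routine bookkeeping you describe. For the reverse inequality, both Gao--Kleywegt and Blanchet--Murthy proceed essentially as you propose: they construct (near-)optimal primal plans by selecting measurable (near-)maximizers of $\zeta\mapsto\ell(\theta,\zeta)-\lambda\,\mathtt{d}^p(\zeta,\xi)$ and then verify feasibility and complementary slackness at the dual optimizer. You have correctly identified the two genuine technical points---the measurable-selection/interchange step and the control of the expected transport cost---and the hypotheses (upper semicontinuity, completeness of $(\Xi,\mathtt{d})$, finite $p$-th moment) that make them go through. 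One small remark: your appeal to Assumption~\ref{ass:bgr} to guarantee $H^\mathbb{P}(\theta,\lambda)<\infty$ goes slightly beyond the hypotheses of the proposition as stated, but it is harmless here since the paper imposes that assumption globally anyway.
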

\begin{remark}
\label{rmk-exist}
By the definition of $h$, for any given $\theta\in\Theta$, we can deduce that there always exists some $\lambda^\mathbb{P}_*(\theta)<\infty$ attaining the infimum of (\ref{eq:dual-worst-risk}).   
\end{remark}

For the sake of convenience, we abbreviate  $R_{\sigma,p}(\theta)=R^{\mathbb{P}_n}_{\sigma,p}(\theta)$, $H(\theta,\lambda)=H^{\mathbb{P}_n}(\theta,\lambda)$,   $\lambda_*(\theta)=\lambda^{\mathbb{P}_n}_*(\theta)$, and  $h_i(\theta,\lambda)=h(\theta,\lambda,\xi_i)$. 
We define the asymptotic growth rate function
$$\kappa(\theta)\coloneqq\limsup _{\mathtt{d}\left(\xi, \xi_0\right) \rightarrow \infty} \frac{\ell(\theta,\xi)-\ell\left(\theta,\xi_0\right)}{\mathtt{d}^{p}\left(\xi, \xi_0\right)}
$$
so as to conclude the continuity of $h_i(\cdot,\cdot)$ in the following two claims. Here $\xi_0$ is the point in Assumption \ref{ass:bgr} (\romannumeral2). 
\begin{claim}[Continuity of $h_i$ on $\theta$]
\label{cla:beta-continuous}
For each $i\in\{1,\dots,n\}$ and any fixed $\lambda\geq0$, we have
$$|h_i(\theta,\lambda)-h_i(\theta',\lambda)|\leq L\|\theta-\theta'\|_2,$$
for any $\theta,\theta'\in \Theta$  with $\kappa(\theta),\kappa(\theta')\leq \lambda$.
\end{claim}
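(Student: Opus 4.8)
The plan is to regard $h_i(\theta,\lambda)$ as a pointwise supremum of a family of functions that are uniformly $L$-Lipschitz in $\theta$, and then to invoke the elementary fact that such a supremum inherits the same Lipschitz constant. Writing $g_\zeta(\theta)\coloneqq \ell(\theta,\zeta)-\lambda\,\mathtt{d}^p(\zeta,\xi_i)$, we have $h_i(\theta,\lambda)=\sup_{\zeta\in\Xi}g_\zeta(\theta)$, and the penalty term $\lambda\,\mathtt{d}^p(\zeta,\xi_i)$ is independent of $\theta$. Hence for every fixed $\zeta$, the Lipschitz smoothness in Assumption~\ref{ass:snb} gives $|g_\zeta(\theta)-g_\zeta(\theta')|=|\ell(\theta,\zeta)-\ell(\theta',\zeta)|\leq L\|\theta-\theta'\|_2$, so each $g_\zeta$ is $L$-Lipschitz with a constant that does not depend on $\zeta$.

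The first step I would carry out is to verify that both $h_i(\theta,\lambda)$ and $h_i(\theta',\lambda)$ are finite, which is exactly the role played by the hypothesis $\kappa(\theta),\kappa(\theta')\leq\lambda$. Fixing $\theta$, the definition of $\kappa(\theta)$ implies that for every $\epsilon>0$ there is a radius $M$ with $\ell(\theta,\zeta)-\ell(\theta,\xi_0)\leq(\kappa(\theta)+\epsilon)\,\mathtt{d}^p(\zeta,\xi_0)$ whenever $\mathtt{d}(\zeta,\xi_0)>M$. Since $\xi_i$ and $\xi_0$ are fixed, the triangle inequality makes the ratio $\mathtt{d}^p(\zeta,\xi_0)/\mathtt{d}^p(\zeta,\xi_i)$ tend to $1$ as $\mathtt{d}(\zeta,\xi_i)\to\infty$, so $g_\zeta(\theta)$ behaves like $(\kappa(\theta)-\lambda)\,\mathtt{d}^p(\zeta,\xi_i)$ up to lower-order terms; when $\kappa(\theta)<\lambda$ this drives $g_\zeta(\theta)\to-\infty$, and together with the continuity of $\ell(\theta,\cdot)$ from Assumption~\ref{ass:bgr} on the bounded remaining region this forces the supremum to be finite.

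With finiteness in hand, the bound follows in two lines. For every $\zeta$ we have $g_\zeta(\theta)=g_\zeta(\theta')+\big(g_\zeta(\theta)-g_\zeta(\theta')\big)\leq h_i(\theta',\lambda)+L\|\theta-\theta'\|_2$, and taking the supremum over $\zeta$ gives $h_i(\theta,\lambda)\leq h_i(\theta',\lambda)+L\|\theta-\theta'\|_2$. Exchanging the roles of $\theta$ and $\theta'$ yields the reverse inequality, and the two together give $|h_i(\theta,\lambda)-h_i(\theta',\lambda)|\leq L\|\theta-\theta'\|_2$.

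The main obstacle is the finiteness argument rather than the final Lipschitz estimate. The delicate case is the boundary $\kappa(\theta)=\lambda$: when the growth rate exactly matches $\lambda$, the leading term $(\kappa(\theta)-\lambda)\,\mathtt{d}^p(\zeta,\xi_i)$ vanishes and the lower-order contributions in $\ell(\theta,\cdot)$ could in principle push the supremum to $+\infty$. Here I expect to need either a strict inequality or an additional appeal to the bounded-growth-rate condition in Assumption~\ref{ass:bgr} to control those terms, and to make the comparison between $\mathtt{d}^p(\zeta,\xi_0)$ and $\mathtt{d}^p(\zeta,\xi_i)$ uniform so that the estimate holds simultaneously for both $\theta$ and $\theta'$ at the relevant data point $\xi_i$.
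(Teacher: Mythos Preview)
Your proposal is correct and follows essentially the same route as the paper: both use that each $\zeta$-indexed function $\ell(\theta,\zeta)-\lambda\,\mathtt{d}^p(\zeta,\xi_i)$ is $L$-Lipschitz in $\theta$ and then pass to the supremum, the paper by picking explicit maximizers $z_1,z_2$ and comparing values, you by the equivalent abstract ``sup of uniformly Lipschitz functions is Lipschitz'' argument. As for the boundary case $\kappa(\theta)=\lambda$ that you worry about, the paper handles it by fiat in Remark~\ref{rem:interval} (assuming $h_i(\theta,\kappa(\theta))<\infty$), so no additional argument is required.
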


\begin{claim}[Continuity of $h_i$ on $\lambda$]
\label{cla:lambda-continuous}
For each $i\in\{1,\dots,n\}$ and any fixed $\theta\in\mathbb{R}^d$, we have  
$$|h_i(\theta,\lambda)-h_i(\theta,\lambda')|\leq \max\{r_i^p(\theta,\lambda),r_i^p(\theta,\lambda')\}|\lambda-\lambda'|, ~\forall \lambda,\lambda'\geq \kappa(\theta),$$
 where $r_i(\theta,\lambda)\coloneqq\min\limits_{\zeta\in\Xi}\{\mathtt{d}(\zeta,\xi_i)\mid \ell(\theta,\zeta)-\lambda \mathtt{d}^p(\zeta,\xi_i)=h_i(\theta,\lambda)\}$ is the closest distance between $\xi_i$ and all the $\zeta$s that attain the supremum of $\ell(\theta,\zeta)-\lambda\mathtt{d}^p(\zeta,\xi_i)$ in $\Xi$.
\end{claim}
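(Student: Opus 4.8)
The plan is to exploit that, for a fixed $\theta$, the Moreau--Yosida regularization $h_i(\theta,\lambda)=\sup_{\zeta\in\Xi}\{\ell(\theta,\zeta)-\lambda\,\mathtt{d}^p(\zeta,\xi_i)\}$ is a pointwise supremum of functions that are \emph{affine} in $\lambda$: for each fixed $\zeta$, the map $\lambda\mapsto\ell(\theta,\zeta)-\lambda\,\mathtt{d}^p(\zeta,\xi_i)$ is affine with slope $-\mathtt{d}^p(\zeta,\xi_i)\le 0$. Hence $h_i(\theta,\cdot)$ is convex and non-increasing, and the desired Lipschitz estimate should come out of a direct ``envelope'' comparison of the maximizers attaining the supremum at $\lambda$ and at $\lambda'$, with the slopes of the active affine pieces supplying exactly the factors $r_i^p(\theta,\lambda)$ and $r_i^p(\theta,\lambda')$.

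First I would confirm that for every $\lambda\ge\kappa(\theta)$ the supremum defining $h_i$ is attained, so that $r_i(\theta,\lambda)$ is meaningful. By Assumption~\ref{ass:bgr}(\romannumeral1) the objective is continuous in $\zeta$, and the definition of $\kappa(\theta)$ guarantees that once $\lambda\ge\kappa(\theta)$ the objective $\ell(\theta,\zeta)-\lambda\,\mathtt{d}^p(\zeta,\xi_i)$ is driven to $-\infty$ as $\mathtt{d}(\zeta,\xi_0)\to\infty$; thus every superlevel set is bounded, and since $\Xi\subseteq\mathbb{R}^{m+1}$ is finite-dimensional the supremum is attained on a compact set. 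The closest maximizer then realizes the minimum in the definition of $r_i(\theta,\lambda)$.

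With attainment in hand, fix $\kappa(\theta)\le\lambda<\lambda'$ and let $\zeta_\lambda,\zeta_{\lambda'}$ be the closest maximizers at $\lambda$ and $\lambda'$, so $\mathtt{d}(\zeta_\lambda,\xi_i)=r_i(\theta,\lambda)$ and $\mathtt{d}(\zeta_{\lambda'},\xi_i)=r_i(\theta,\lambda')$. Substituting $\zeta_\lambda$ into the supremum defining $h_i(\theta,\lambda')$, and $\zeta_{\lambda'}$ into the one defining $h_i(\theta,\lambda)$, produces the two feasibility inequalities
\begin{eqnarray*}
h_i(\theta,\lambda') &\ge& h_i(\theta,\lambda)-(\lambda'-\lambda)\,r_i^p(\theta,\lambda),\\
h_i(\theta,\lambda) &\ge& h_i(\theta,\lambda')+(\lambda'-\lambda)\,r_i^p(\theta,\lambda').
\end{eqnarray*}
These sandwich $h_i(\theta,\lambda)-h_i(\theta,\lambda')$ between $(\lambda'-\lambda)\,r_i^p(\theta,\lambda')$ and $(\lambda'-\lambda)\,r_i^p(\theta,\lambda)$; since $h_i(\theta,\cdot)$ is non-increasing this difference is nonnegative, so taking absolute values and bounding the larger endpoint by $\max\{r_i^p(\theta,\lambda),r_i^p(\theta,\lambda')\}$ yields the claim. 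The case $\lambda>\lambda'$ is symmetric, and swapping the roles of $\lambda,\lambda'$ is exactly what necessitates the $\max$.

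I expect the only genuinely subtle step to be the attainment argument: outside the range $\lambda\ge\kappa(\theta)$ the supremum may fail to be achieved, or even be infinite, which is precisely why the hypothesis restricts to $\lambda,\lambda'\ge\kappa(\theta)$ and why $\kappa(\theta)$ was introduced in the first place. Once attainment is secured the remainder is the two-line comparison above, so the proof is really carried by the coercivity estimate, with the boundary value $\lambda=\kappa(\theta)$ requiring the most care under Assumption~\ref{ass:bgr}.
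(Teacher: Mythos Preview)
Your argument is correct and, in fact, more self-contained than the paper's. The paper proves the claim by citing \cite[Lemma~3]{gao2016distributionally}: it quotes that $h_i(\theta,\cdot)$ is convex and non-increasing, introduces the quantity $\underline D(\theta,\lambda,\xi_i):=\liminf_{\delta\downarrow 0}\{\mathtt{d}(\xi_i,\zeta):\ell(\theta,\zeta)-\lambda\mathtt{d}^p(\xi_i,\zeta)\ge h_i(\theta,\lambda)-\delta\}$, records that $-\underline D^p$ lies in the subdifferential of $h_i(\theta,\cdot)$, and then uses the convexity/subgradient inequality to get $|h_i(\theta,\lambda)-h_i(\theta,\lambda')|\le\max\{\underline D^p(\theta,\lambda,\xi_i),\underline D^p(\theta,\lambda',\xi_i)\}\,|\lambda-\lambda'|$, finishing with the observation $r_i\ge\underline D$. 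Your direct ``swap the maximizers'' comparison is exactly the elementary computation underlying that subgradient statement, so you recover the same bound without the external reference; as a bonus your sandwich also shows $r_i^p(\theta,\lambda')\le r_i^p(\theta,\lambda)$ when $\lambda<\lambda'$, so the $\max$ is in fact attained at the smaller of the two.

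The one place the paper is slightly more careful than your sketch is the attainment issue you flag at the boundary: rather than arguing coercivity at $\lambda=\kappa(\theta)$ (which need not hold, since the objective can stay bounded along a sequence realizing the $\limsup$ in the definition of $\kappa$), the paper simply declares $r_i(\theta,\lambda)=\infty$ whenever the maximizer set is empty, which makes the inequality vacuously true there. Adopting that same convention would close your boundary case without further work.
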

\begin{remark}
\label{rem:interval}
The reason that we let $\lambda\geq\kappa(\theta)$ in the above claims is that each $h_i(\theta,\lambda)$ goes to infinity if $\lambda<\kappa(\theta)$. Without loss of generality\footnote{It is possible that $h_i(\theta,\kappa(\theta))=\infty$, e.g., $\ell(\theta,\xi)$ is the loss function of ordinary linear regression. In this case, the argument in this paper still holds with  slight modification. }, we suppose $h_i(\theta,\kappa(\theta))<\infty$ in this paper. 
\end{remark}



\section{From Coresets to Dual Coresets}
\label{sec-relation}
In this section, we provide the concept of ``dual coreset'' and prove that it is sufficient to guarantee the correctness with respect to the \textsf{WDRO} coreset. First, we present the definition of the  dual coreset  via directly combining Proposition \ref{pro:duality} and Definition \ref{def:coreset}. Suppose $I$ is an interval depending on $\theta$ (we will discuss this assumption in  detail later). 

 
\begin{definition}[Dual $\epsilon$-Coreset]
\label{def:dualcoreset}
A dual $\epsilon$-coreset for the \textsf{WDRO} problem (\ref{eq:WDRO}) is a sparse non-negative mass vector $W=[w_1,\dots,w_n]$ such that the total mass $\sum\limits_{i=1}^nw_i=1$ and 
\begin{eqnarray}
\label{eq:dualcoreset}
\tilde H(\theta,\lambda)\coloneqq\sum\limits_{i=1}^n w_ih_i(\theta,\lambda)\in (1\pm \epsilon) H(\theta,\lambda)
\end{eqnarray}
for any $\theta\in \Theta$ and $\lambda\in I$. 
\end{definition}
 
\begin{remark}
 Note that we require the approximation guarantee holds not only for any $\theta\in\Theta$, but also for any $\lambda\in I$ in the above definition. This is also a key difference to the traditional coresets. 
\end{remark}

By the discussion in Remark \ref{rem:interval}, we know $I\subset [\kappa(\theta),\infty)$. If we directly let $I=[\kappa(\theta),\infty)$, the   dual coreset of Definition~\ref{def:dualcoreset} requires to approximate the queries from all $\lambda\geq\kappa(\theta)$, which is too \textbf{strong}  and can be even troublesome for the coreset construction. Below we show that a bounded $I$ is sufficient for guaranteeing a dual coreset to be a qualified \textsf{WDRO} coreset.

Given a non-negative mass vector $W=[w_1,\cdots,w_n]$, the corresponding weighted empirical distribution is $\mathbb{\tilde P}_n=\sum_{i=1}^nw_i\delta_{\xi_i}$. 
Recall that we define a parameter $\lambda^{\mathbb{P}}_*(\theta)$ for duality in Proposition~\ref{pro:duality}. 
Together with Assumption \ref{ass:bgr}, we show the boundedness of the $\lambda^{\mathbb{\tilde P}_n}_*(\theta)$ (abbreviated as $\tilde \lambda_*(\theta)$ for convenience) for  $\mathbb{\tilde P}_n$. Let $[n]=\{1, 2,\cdots, n\}$. The following result is a key to relax the requirement for the dual coreset in Definition~\ref{def:dualcoreset}. 

\begin{lemma}[Boundedness of $\tilde\lambda_*$]
\label{lem:boundedness}
Given the empirical distribution $\mathbb{P}_n=\frac{1}{n}\sum\limits_{i=1}^n\delta_{\xi_i}$, we define the value $\rho=\max\limits_{i\in[n]}\{\mathtt{d}(\xi_i,\xi_0)\}$ that is the largest distance from the data samples to $\xi_0$. Here $\xi_0$ is defined in Assumption \ref{ass:bgr} (\romannumeral2). For any  $\theta\in\Theta$  and any mass vector $W$,  the  $\tilde \lambda_*(\theta)$ of the corresponding weighted empirical distribution  $\mathbb{\tilde P}_n$ is \textbf{no larger than}  
\begin{eqnarray}
\label{eq:boundedness}
  \mathtt{C}(\theta)\cdot\left(2^{p-1}+\frac{1+2^{p-1}\rho^p}{\sigma^p}\right), \quad 
\end{eqnarray}
where $\mathtt{C}(\theta)$ is defined in Assumption \ref{ass:bgr} (\romannumeral2). We use $\tau(\theta)$ to denote this upper bound  $\mathtt{C}(\theta)\left(2^{p-1}+\frac{1+2^{p-1}\rho^p}{\sigma^p}\right)$.
\end{lemma}

\begin{remark}
\textbf{(\romannumeral1)} In practice, we usually normalize the dataset before training a machine learning model, which implies that $\rho$ is not large. 
\textbf{(\romannumeral2)} It is worth noting that the above lemma can help us to  compute an upper bound for $\lambda_*(\theta)$. For example, if letting $W=[\frac{1}{n},\dots,\frac{1}{n}]$, (\ref{eq:boundedness}) directly yields an upper bound.
\end{remark}

The following theorem shows that the query region $I=[\kappa(\theta),\tau(\theta)]$  is  sufficient  for obtaining a coreset of the \textsf{WDRO} problem (\ref{eq:WDRO}).

\begin{theorem}[Sufficiency of the bounded query region]
\label{the-dualsuff}
If we let query region $I=[\kappa(\theta),\tau(\theta)]$ in Definition \ref{def:dualcoreset}, the dual $\epsilon$-coreset defined in such way also satisfies the coreset of Definition \ref{def:coreset}. 
\end{theorem}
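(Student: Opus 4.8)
The plan is to translate the min--max quantities $R_{\sigma,p}(\theta)$ and $R^{\tilde{\mathbb{P}}_n}_{\sigma,p}(\theta)$ into their dual ``$\inf_\lambda$'' forms via Proposition~\ref{pro:duality}, and then exploit the fact that both the original and the coreset-weighted problems have their optimal dual variable confined to the interval $I=[\kappa(\theta),\tau(\theta)]$, which is precisely the region on which the dual coreset guarantee is assumed to hold. Concretely, by strong duality I write $R_{\sigma,p}(\theta)=\lambda_*(\theta)\sigma^p+H(\theta,\lambda_*(\theta))$ and $R^{\tilde{\mathbb{P}}_n}_{\sigma,p}(\theta)=\tilde\lambda_*(\theta)\sigma^p+\tilde H(\theta,\tilde\lambda_*(\theta))$. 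The first thing I would verify is the containment $\lambda_*(\theta),\tilde\lambda_*(\theta)\in I$: the lower bound $\lambda\ge\kappa(\theta)$ follows from Remark~\ref{rem:interval} (each $h_i$, and hence $H$ and $\tilde H$, diverges below $\kappa(\theta)$, so no minimizer can lie there), while the upper bound $\lambda\le\tau(\theta)$ is exactly Lemma~\ref{lem:boundedness}, which holds for an arbitrary mass vector $W$ and therefore bounds both $\tilde\lambda_*(\theta)$ (coreset weights) and $\lambda_*(\theta)$ (uniform weights $W=[\tfrac1n,\dots,\tfrac1n]$).

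Given this containment, I would establish the two-sided inequality separately. For the upper bound, since $R^{\tilde{\mathbb{P}}_n}_{\sigma,p}(\theta)$ is an infimum over $\lambda$, I evaluate it at the feasible choice $\lambda=\lambda_*(\theta)\in I$ to obtain $R^{\tilde{\mathbb{P}}_n}_{\sigma,p}(\theta)\le\lambda_*(\theta)\sigma^p+\tilde H(\theta,\lambda_*(\theta))$; the dual-coreset property on $I$ then gives $\tilde H(\theta,\lambda_*(\theta))\le(1+\epsilon)H(\theta,\lambda_*(\theta))$, and since $\lambda_*(\theta)\sigma^p\ge0$ I can inflate it to $(1+\epsilon)\lambda_*(\theta)\sigma^p$ and factor out $(1+\epsilon)$, yielding $R^{\tilde{\mathbb{P}}_n}_{\sigma,p}(\theta)\le(1+\epsilon)R_{\sigma,p}(\theta)$. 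The lower bound is the mirror image: I evaluate the original infimum at $\lambda=\tilde\lambda_*(\theta)\in I$, giving $R_{\sigma,p}(\theta)\le\tilde\lambda_*(\theta)\sigma^p+H(\theta,\tilde\lambda_*(\theta))$, multiply through by $(1-\epsilon)$, and use $(1-\epsilon)H(\theta,\tilde\lambda_*(\theta))\le\tilde H(\theta,\tilde\lambda_*(\theta))$ together with $(1-\epsilon)\tilde\lambda_*(\theta)\sigma^p\le\tilde\lambda_*(\theta)\sigma^p$ to conclude $(1-\epsilon)R_{\sigma,p}(\theta)\le R^{\tilde{\mathbb{P}}_n}_{\sigma,p}(\theta)$. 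Combining the two gives $R^{\tilde{\mathbb{P}}_n}_{\sigma,p}(\theta)\in(1\pm\epsilon)R_{\sigma,p}(\theta)$ for every $\theta\in\Theta$, which is exactly Definition~\ref{def:coreset}.

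The only genuinely delicate point, and the step I expect to be the crux, is the containment of both optimal dual variables in $I$; everything afterward is a short manipulation that crucially relies on the non-negativity of $\ell$ (hence of $H$ and $\tilde H$) and of the term $\lambda\sigma^p$ to turn the additive transport penalty into a harmless multiplicative factor. The asymmetry of the argument is worth emphasizing: the upper bound needs $\lambda_*(\theta)\in I$ whereas the lower bound needs $\tilde\lambda_*(\theta)\in I$, so it is essential that Lemma~\ref{lem:boundedness} holds uniformly over all mass vectors rather than merely for the coreset. I would also confirm that $H(\theta,\lambda_*(\theta))$ and $\tilde H(\theta,\tilde\lambda_*(\theta))$ are finite, which is guaranteed by Remark~\ref{rmk-exist} together with the convention $h_i(\theta,\kappa(\theta))<\infty$ of Remark~\ref{rem:interval}, so that all quantities manipulated above are well defined.
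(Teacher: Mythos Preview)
Your proposal is correct and follows essentially the same route as the paper's own proof: both arguments use Lemma~\ref{lem:boundedness} (and Remark~\ref{rem:interval}) to place $\lambda_*(\theta)$ and $\tilde\lambda_*(\theta)$ inside $I=[\kappa(\theta),\tau(\theta)]$, then plug each optimizer into the other problem's dual infimum and invoke the dual-coreset bound together with non-negativity of $\lambda\sigma^p$ to obtain the two-sided estimate. Your write-up is in fact slightly more explicit than the paper's in justifying the lower endpoint $\lambda\ge\kappa(\theta)$ and in flagging the asymmetric roles of $\lambda_*$ versus $\tilde\lambda_*$, but the substance is identical.
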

Therefore in the rest of this paper, we let $I=[\kappa(\theta),\tau(\theta)]$ in Definition \ref{def:dualcoreset}. Theorem~\ref{the-dualsuff} also implies the following corollary. So we can only focus on solving the dual  \textsf{WDRO} problem (\ref{eq:WDRO}) on the obtained dual $\epsilon$-coreset. 
\begin{corollary}
Given $\alpha\geq 1$, we suppose the parameter vector $\theta_0$ yields an $\alpha$-approximation obtained on the dual $\epsilon$-coreset. Then $\theta_0$ is also an $(\alpha\cdot\frac{1+\epsilon}{1-\epsilon})$-approximation of the original \textsf{WDRO} (\ref{eq:WDRO}).
\end{corollary}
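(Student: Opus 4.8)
The plan is to transfer the approximation quality from the coreset objective to the original \textsf{WDRO} objective, using the two-sided relative-error guarantee that Theorem~\ref{the-dualsuff} hands us. First I would fix notation: write $\tilde{\mathbb{P}}_n=\sum_{i=1}^n w_i\delta_{\xi_i}$ for the weighted distribution induced by the dual coreset $W$, let $\tilde\theta_*$ be a minimizer of $R^{\tilde{\mathbb{P}}_n}_{\sigma,p}(\cdot)$ over $\Theta$, and recall that $\theta_*$ is the minimizer on $\mathbb{P}_n$. The hypothesis that $\theta_0$ is an $\alpha$-approximation obtained on the dual coreset means precisely $R^{\tilde{\mathbb{P}}_n}_{\sigma,p}(\theta_0)\leq\alpha\,R^{\tilde{\mathbb{P}}_n}_{\sigma,p}(\tilde\theta_*)$, i.e.\ its coreset objective value is within a factor $\alpha$ of the best achievable coreset value.

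The second step is to invoke the machinery that has already been built. Since we have set the query region to $I=[\kappa(\theta),\tau(\theta)]$, Theorem~\ref{the-dualsuff} certifies that the dual $\epsilon$-coreset is a genuine \textsf{WDRO} coreset in the sense of Definition~\ref{def:coreset}, so that $R^{\tilde{\mathbb{P}}_n}_{\sigma,p}(\theta)\in(1\pm\epsilon)R^{\mathbb{P}_n}_{\sigma,p}(\theta)$ for \emph{every} $\theta\in\Theta$. This yields two usable inequalities: the lower bound $R^{\tilde{\mathbb{P}}_n}_{\sigma,p}(\theta)\geq(1-\epsilon)R^{\mathbb{P}_n}_{\sigma,p}(\theta)$, which I will apply at $\theta_0$ to pass from the true objective up to the coreset objective, and the upper bound $R^{\tilde{\mathbb{P}}_n}_{\sigma,p}(\theta)\leq(1+\epsilon)R^{\mathbb{P}_n}_{\sigma,p}(\theta)$, which I will apply at $\theta_*$ to pass back down.

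The third step is the standard sandwiching chain. Using the lower bound at $\theta_0$, then the $\alpha$-approximation hypothesis, then the optimality of $\tilde\theta_*$ (so its value is no larger than that at $\theta_*$), and finally the upper bound at $\theta_*$, I obtain
\begin{align*}
R^{\mathbb{P}_n}_{\sigma,p}(\theta_0)
&\leq \frac{1}{1-\epsilon}R^{\tilde{\mathbb{P}}_n}_{\sigma,p}(\theta_0)
\leq \frac{\alpha}{1-\epsilon}R^{\tilde{\mathbb{P}}_n}_{\sigma,p}(\tilde\theta_*)\\
&\leq \frac{\alpha}{1-\epsilon}R^{\tilde{\mathbb{P}}_n}_{\sigma,p}(\theta_*)
\leq \frac{\alpha(1+\epsilon)}{1-\epsilon}R^{\mathbb{P}_n}_{\sigma,p}(\theta_*).
\end{align*}
Since $R^{\mathbb{P}_n}_{\sigma,p}(\theta_*)=\min_{\theta\in\Theta}R^{\mathbb{P}_n}_{\sigma,p}(\theta)$ is the optimum of the original \textsf{WDRO} (\ref{eq:WDRO}), this is exactly the statement that $\theta_0$ is an $(\alpha\cdot\frac{1+\epsilon}{1-\epsilon})$-approximation.

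I do not expect any genuinely hard step here: the argument is routine once the coreset guarantee is available, and the entire analytic burden has already been discharged by Theorem~\ref{the-dualsuff} (which in turn relied on strong duality and the boundedness of $\tilde\lambda_*$ from Lemma~\ref{lem:boundedness}). The only thing to be careful about is bookkeeping — applying the correct side of the $(1\pm\epsilon)$ bound at the correct point, and ensuring the guarantee is used only at $\theta_0$ and $\theta_*$, where Definition~\ref{def:coreset} indeed holds since it is valid for all $\theta\in\Theta$. I would also remark that the degradation is mild, as $\frac{1+\epsilon}{1-\epsilon}=1+O(\epsilon)$ for small $\epsilon$, so solving on the coreset loses essentially nothing beyond a lower-order factor.
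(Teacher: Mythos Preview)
Your proposal is correct and follows exactly the standard sandwiching argument the paper has in mind when it says the corollary is implied by Theorem~\ref{the-dualsuff}; the paper does not spell out a proof, but your chain of inequalities is precisely the intended one.
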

To end this section, similar to $\mathbb{B}(\theta_{\mathtt{anc}},l_{\mathtt{p}})$ in Assumption \ref{ass:bgr} (\romannumeral1), we define an interval $[\lambda_{\mathtt{anc}}-l_{\mathtt{d}},\lambda_{\mathtt{anc}}+l_{\mathtt{d}}]$ centered at some ``anchor'' point $\lambda_{\mathtt{anc}}>0$ with radius $l_{\mathtt{d}}>0$. To ensure that $[\kappa(\theta),\tau(\theta)]$ is within the interval $[\lambda_{\mathtt{anc}}-l_{\mathtt{d}},\lambda_{\mathtt{anc}}+l_{\mathtt{d}}]$ for all $\theta\in\Theta$, we let $\lambda_{\mathtt{anc}}\coloneqq\max\limits_{\theta\in\Theta}\{\kappa(\theta_{\mathtt{anc}}),\frac{\tau(\theta)}{2}\}$  and $l_{\mathtt{d}}\coloneqq\lambda_{\mathtt{anc}}$.

\section{The Construction of Dual Coresets}
\label{sec:coreset}
Following the results of Section~\ref{sec-relation}, we show how to compute a qualified dual coreset in this section. 
Suppose we can evaluate the lower and upper bounds for each $h_i(\cdot,\cdot)$ with respect to  a given couple  $(\lambda_{\mathtt{anc}},\theta_{\mathtt{anc}})$, namely, we have 
$$a_i(\theta_{\mathtt{anc}},\lambda_{\mathtt{anc}})\leq h_i(\theta_{\mathtt{anc}},\lambda_{\mathtt{anc}}) \leq b_i(\theta_{\mathtt{anc}},\lambda_{\mathtt{anc}})$$
for $1\leq i\leq n$.
We defer the details for obtaining such upper and lower bounds for each application to Section~\ref{sec:app}.

\begin{algorithm}[tb]
	\caption{Dual $\epsilon$-Coreset Construction}
	\label{alg:dualcoreset}
	\begin{algorithmic}
		\STATE {\bfseries Input:}  The empirical distribution $\mathbb{P}_n=\frac{1}{n}\sum_{i=1}^n\delta_{\xi_i}$,
		the Lipschitz constant $L$, the ``anchors'' $\theta_{\mathtt{anc}}$ and $\lambda_{\mathtt{anc}}$, and corresponding radii $l_{\mathtt{p}}$ and $l_{\mathtt{d}}$;
		the parameter $\epsilon\in(0,1)$; lower bound oracle $a_i(\cdot,\cdot)$ and upper bound oracle $b_i(\cdot,\cdot)$ for $i\in[n]$. 
		 
		\begin{enumerate}
		\item Compute  $A=\frac{1}{n}\sum_{i=1}^na_i(\theta_{\mathtt{anc}},\lambda_{\mathtt{anc}})$ and  $B=\frac{1}{n}\sum_{i=1}^nb_i(\theta_{\mathtt{anc}},\lambda_{\mathtt{anc}})$.
		\item Let $N=\lceil\log n\rceil$; initialize $W=[0, 0, \cdots, 0]\in \mathbb{R}^n$.
		\item The dataset $\{\xi_1,\dots,\xi_n\}$ is partitioned into $(N+1)^2$ cells $\{C_{ij}| 0\leq i,j\leq N\}$ as (\ref{eq:grid}). 
		\item For each $C_{ij}\ne\emptyset$, $0\leq i,j\leq N$:
		\begin{enumerate}
		\item  take a sample $Q_{ij}$ from $ C_{ij}$ uniformly at random, where the size $|Q_{ij}|$ depends on the parameters $\epsilon$, $l_{\mathtt{p}}$, $l_{\mathtt{d}}$ and $L$ (the exact value will be discussed in our following analysis in Section~\ref{sec:analysis});
		\item  for each sample $\xi_k\in Q_{ij}$,  assign the mass of quantity $w_k=\frac{|C_{ij}|}{n|Q_{ij}|}$;
		\end{enumerate}
		\end{enumerate}

		\STATE {\bfseries Output:} the mass vector $W=[w_1, w_2, \cdots, w_n]$ as the dual $\epsilon$-coreset.
	\end{algorithmic}
\end{algorithm}

\subsection{The Construction Algorithm} 
\label{sec:contruction}
We show the dual $\epsilon$-coreset construction procedure in Algorithm~\ref{alg:dualcoreset}, where the high-level idea is based on the following grid sampling. 

\textbf{Grid sampling.} Let $N=\lceil \log n \rceil$. Given the anchor $(\theta_{\mathtt{anc}},\lambda_{\mathtt{anc}})$, we can conduct the partitions over the dataset based on the lower bounds $a_i(\theta_{\mathtt{anc}},\lambda_{\mathtt{anc}})$ and upper bounds $b_i(\theta_{\mathtt{anc}},\lambda_{\mathtt{anc}})$ separately.  Let $A = \frac{1}{n}\sum_{i=1}^{n} a_i(\theta_{\mathtt{anc}},\lambda_{\mathtt{anc}})$ and $B = \frac{1}{n}\sum_{i=1}^{n} b_i(\theta_{\mathtt{anc}},\lambda_{\mathtt{anc}})$. Then we have the following partitions. 
\begin{eqnarray}
\label{eq:lowerlayer1}
A_0&=&\big\{\xi_i\mid a_i(\theta_{\mathtt{anc}},\lambda_{\mathtt{anc}}) \le A\big\}, \label{for-layer1}\\
A_j&=&\big\{\xi_i\mid 2^{j-1}A < a_i(\theta_{\mathtt{anc}},\lambda_{\mathtt{anc}}) \le
2^{j}A\big\}, 1 \le j \le N.\label{eq:lowerlayer2}
 \end{eqnarray}	
\begin{eqnarray}
\label{eq:upperlayer1}
B_0&=&\big\{\xi_i\mid b_i(\theta_{\mathtt{anc}},\lambda_{\mathtt{anc}}) \le B\big\}, \label{for-layer1}\\
B_j&=&\big\{\xi_i\mid 2^{j-1}B < b_i(\theta_{\mathtt{anc}},\lambda_{\mathtt{anc}}) \le
2^{j}B\big\}, 1 \le j \le N.\label{eq:upperlayer2}
 \end{eqnarray}

We denote the lower bound and upper bound partitions as $\mathcal{A} = \{A_0, \cdots,A_N\}$ and $\mathcal{B} = \{B_0,\cdots,B_N\}$ respectively. Then, we compute the intersections over  $\mathcal{A}$ and $\mathcal{B}$ to generate the ``grid'':
\begin{eqnarray}
\label{eq:grid}
\mathcal{C} = \{ C_{ij}|C_{ij} =A_i \cap B_j, 0\leq i,j\leq N\}.
\end{eqnarray}
It is easy to see that $\mathcal{C}$ is a collection of disjoint ``cells''  and $\bigcup\limits_{i,j} C_{ij} = P$.  For each $\xi_k\in C_{ij}$, we have
\begin{eqnarray}
\label{eq:cell}
\mu_i\cdot 2^{i-1}A\leq h_k(\theta_{\mathtt{anc}},\lambda_{\mathtt{anc}})\leq 2^jB
\end{eqnarray}
 where  $\mu_i=0$ if $i=0$ and $\mu_i=1$ otherwise. 
Through the grid partition $\mathcal{C}$, we can  take a set of samples $Q_{ij}$ from $ C_{ij}$ uniformly at random, and assign the weight $\frac{|C_{ij}|}{n|Q_{ij}|}$  to each sample.
\begin{remark}
\label{rem:upper}
\textbf{(\romannumeral1)} The grid sampling is a variance reduction technique in the Monte-Carlo methods~\citep{gobet2016monte}, since the grid partition is also a stratification for $h_k(\theta_{\mathtt{anc}},\lambda_{\mathtt{anc}})$ as shown in (\ref{eq:cell}). 
If we consider only the upper bounds $b_i(\theta_{\mathtt{anc}},\lambda_{\mathtt{anc}})$ or the lower bounds $a_i(\theta_{\mathtt{anc}},\lambda_{\mathtt{anc}})$, the obtained  partition is not a valid stratification for $h_k(\theta_{\mathtt{anc}},\lambda_{\mathtt{anc}})$.
\textbf{(\romannumeral2)} If we can obtain the exact value of $h_i(\theta_{\mathtt{anc}},\lambda_{\mathtt{anc}})$, i.e.,  $a_i(\theta_{\mathtt{anc}},\lambda_{\mathtt{anc}})=b_i(\theta_{\mathtt{anc}},\lambda_{\mathtt{anc}})=h_i(\theta_{\mathtt{anc}},\lambda_{\mathtt{anc}})$, then the grid partition is exactly the spatial partition that was studied before~\cite{chen2009coresets,DBLP:conf/nips/WangGD21}.
\end{remark}

\subsection{Theoretical Analysis}
\label{sec:analysis}
In this section we analyze the  complexity of Algorithm \ref{alg:dualcoreset} in theory. Recall that we define  $r_i(\theta,\lambda)=\min\limits_{\zeta\in\Xi}\{\mathtt{d}(\zeta,\xi_i)\colon\ell(\theta,\zeta)-\lambda \mathtt{d}^p(\zeta,\xi_i)=h_i(\theta,\lambda)\}$ in Claim~\ref{cla:lambda-continuous}. 
The following theorem provides an asymptotic sample complexity of Algorithm \ref{alg:dualcoreset}. To state the theorem clearly, we define two notations $R\coloneqq\max\limits_{i\in[n]\atop \theta\in\Theta}\{r^p_i(\kappa(\theta),\theta)\}$ and $H\coloneqq\min\limits_{\theta\in\Theta\atop \lambda\in[\lambda_{\mathtt{anc}}-l_{\mathtt{d}},\lambda_{\mathtt{anc}}+l_{\mathtt{d}}]}H(\theta,\lambda)$.

\begin{theorem}
\label{the:wdrocoreset}
Set $|Q_{ij}|=\tilde{O}\left(\left(B\cdot\frac{B+L l_{\mathtt{p}}+Rl_{\mathtt{d}}}{AH}\right)^{2} \cdot \frac{d}{\epsilon^{2}}\right)$\footnote{$\tilde O(g)\coloneqq O(g\cdot\mathtt{polylog}(\frac{nLl_{\mathtt{p}} Rl_{\mathtt{d}}}{\epsilon H}))$} in the Algorithm \ref{alg:dualcoreset}. Then the returned $W$ is a qualified dual $\epsilon$-coreset with probability at least $1-\frac{1}{n}$. The construction time is $O(n\cdot \mathtt{time}_{ab})$ where  $\mathtt{time}_{ab}$ is the time complexity for computing the lower bound $a_i(\theta_{\mathtt{anc}},\lambda_{\mathtt{anc}})$ and the upper bound $b_i(\theta_{\mathtt{anc}},\lambda_{\mathtt{anc}})$ for each $h_i(\theta_{\mathtt{anc}},\lambda_{\mathtt{anc}})$.
\end{theorem}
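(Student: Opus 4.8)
The plan is to prove that the weighted distribution $\tilde{\mathbb{P}}_n$ returned by Algorithm~\ref{alg:dualcoreset} satisfies the dual $\epsilon$-coreset guarantee of Definition~\ref{def:dualcoreset} \emph{uniformly} over the whole query region $\theta\in\Theta$ and $\lambda\in I=[\kappa(\theta),\tau(\theta)]$. First I would observe that, since $\sum_i w_i=1$ and every sampled point in a cell $C_{ij}$ carries mass $|C_{ij}|/(n|Q_{ij}|)$, the quantity $\tilde H(\theta,\lambda)$ is an \emph{unbiased} estimator of $H(\theta,\lambda)=\frac1n\sum_i h_i(\theta,\lambda)$: averaging over the uniform draw $Q_{ij}$ recovers the exact cell sum $\frac1n\sum_{\xi_k\in C_{ij}}h_k(\theta,\lambda)$, and summing over the disjoint cells of (\ref{eq:grid}) reproduces $H$. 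Thus the task reduces to a concentration argument controlling $|\tilde H(\theta,\lambda)-H(\theta,\lambda)|$.

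The key structural ingredient is that the grid is a \emph{stratification}. By (\ref{eq:cell}) every value $h_k(\theta_{\mathtt{anc}},\lambda_{\mathtt{anc}})$ in $C_{ij}$ lies between $\mu_i 2^{i-1}A$ and $2^{j}B$, and a Markov-type count on the strata bounds the sizes as $|A_i|\le n\cdot 2^{1-i}$ and $|B_j|\le n\cdot 2^{1-j}$ (a point in $A_i$ has lower bound exceeding $2^{i-1}A$ while the $a_k$ average to $A$, symmetrically for $B_j$). I would then fix one query point, apply Hoeffding's inequality inside each cell to the bounded variables $h_k(\theta,\lambda)$, and sum the per-cell deviations weighted by $|C_{ij}|/n$. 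The crucial cancellation is $\frac{|C_{ij}|}{n}\cdot(\text{cell value range})\lesssim 2^{1-j}\cdot 2^{j}B=O(B)$, which prevents the top stratum $j=N=\lceil\log n\rceil$ (where values reach $\sim nB$) from blowing up the sample size; carrying the $O(\log^2 n)$ cells and the additive Lipschitz slack through this computation produces the $B$ and $B+Ll_{\mathtt{p}}+Rl_{\mathtt{d}}$ factors, while dividing by the lower bound $H=\min H(\theta,\lambda)$ converts the additive estimate into the multiplicative $(1\pm\epsilon)$ one and yields the $1/(AH)$ dependence.

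To upgrade a single-point bound to the \emph{uniform} statement over the continuum $\Theta\times I$, I would build an $\eta$-net on the ball $\mathbb{B}(\theta_{\mathtt{anc}},l_{\mathtt{p}})$ times the interval $[\lambda_{\mathtt{anc}}-l_{\mathtt{d}},\lambda_{\mathtt{anc}}+l_{\mathtt{d}}]$ (which contains every $[\kappa(\theta),\tau(\theta)]$ by the choice of $\lambda_{\mathtt{anc}},l_{\mathtt{d}}$ at the end of Section~\ref{sec-relation}), apply the cell-wise Hoeffding bound at each net point, and take a union bound. The net has $(l_{\mathtt{p}}/\eta)^{d}\cdot(l_{\mathtt{d}}/\eta)$ points, so its log-cardinality contributes the explicit factor $d$ with the remaining logarithms folded into the $\tilde O(\cdot)$, and pushing the failure probability below $1/n$ adds a further $\log n$ term. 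Between net points I would interpolate using Claim~\ref{cla:beta-continuous} ($L$-Lipschitz in $\theta$) and Claim~\ref{cla:lambda-continuous} ($r_i^{p}\le R$-Lipschitz in $\lambda$): since $\tilde H$ and $H$ are convex combinations of the $h_i$, both inherit these Lipschitz constants, so choosing $\eta\sim \epsilon H/(L+R)$ forces the interpolation error to at most $O(\epsilon H)$. The construction-time claim is immediate, as the partition and sampling cost $O(n)$ once the $n$ pairs $(a_i,b_i)$ are computed, for a total of $O(n\cdot\mathtt{time}_{ab})$.

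The hardest part is the interplay between stratified variance control and the continuity claims. The Lipschitz bounds of Claim~\ref{cla:beta-continuous} and Claim~\ref{cla:lambda-continuous} hold only on $\{\lambda\ge\kappa(\theta)\}$ (otherwise $h_i\to\infty$ by Remark~\ref{rem:interval}), so the net argument must stay inside this region, which is exactly why Theorem~\ref{the-dualsuff} and the choice $I=[\kappa(\theta),\tau(\theta)]$ with its enclosing interval are needed; at the same time, the per-cell range must be measured at an arbitrary net point rather than at the anchor, so I must absorb the Lipschitz slack $Ll_{\mathtt{p}}+Rl_{\mathtt{d}}$ into the range without letting it dominate the $2^{j}B$ term. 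Arranging these two estimates so that the $n$-dependence cancels (leaving only $\mathtt{polylog}$ factors) and the stated product $\big(B\tfrac{B+Ll_{\mathtt{p}}+Rl_{\mathtt{d}}}{AH}\big)^{2}\tfrac{d}{\epsilon^{2}}$ emerges is the main technical obstacle.
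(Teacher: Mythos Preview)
Your proposal is essentially the same architecture as the paper's proof: per-cell Hoeffding bounds (the paper's Lemma~\ref{lem:eachcell}), aggregation across the $(N+1)^2$ cells (Lemma~\ref{lem:delta}), a finite net over $\mathbb{B}(\theta_{\mathtt{anc}},l_{\mathtt{p}})\times[\lambda_{\mathtt{anc}}-l_{\mathtt{d}},\lambda_{\mathtt{anc}}+l_{\mathtt{d}}]$ with a union bound (Lemma~\ref{lem:allbeta}), and Lipschitz interpolation via Claims~\ref{cla:beta-continuous} and~\ref{cla:lambda-continuous} to pass from net points to arbitrary $(\theta,\lambda)$.

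The only bookkeeping difference is in the aggregation step. You bound $|C_{ij}|\le|B_j|\le n\cdot 2^{1-j}$ by a Markov count and argue the cancellation $\tfrac{|C_{ij}|}{n}\cdot 2^{j}B=O(B)$ directly; the paper instead fixes cell-dependent deviation targets $\delta_{ij}=\epsilon_1(2^{j-1}+2^{i-1})A$ and invokes the summed identity $\sum_{i,j}|C_{ij}|2^{i}\le 3n$, $\sum_{i,j}|C_{ij}|2^{j}\le 3n$ (its Claim~\ref{cla:delta}). These are two ways of expressing the same stratified-variance cancellation, and either route leads to the stated sample complexity up to the $\tilde O$ factors. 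Your identification of the $d$ factor as coming from the log-cardinality of the net, and your handling of the $\lambda\ge\kappa(\theta)$ constraint, also match the paper (which does the latter via a two-case argument on whether $\kappa(\theta_{\mathtt{anc}})\le\kappa(\theta)$ in the proof of Lemma~\ref{lem:eachcell}).
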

\begin{remark}
Note that the value $H\geq \min_{\theta\in\Theta}\mathbb{E}^{\mathbb{P}_n}\ell(\theta,\xi)$, which should not be too small in practice since the loss function  $\ell(\cdot,\cdot)$ usually contains positive penalty terms. The value of $R$ will be discussed in Section~\ref{sec:app}.
\end{remark}
We show the sketched proof of Theorem \ref{the:wdrocoreset} below.
Based on the continuity of $h_i(\cdot,\cdot)$ and the Hoeffding's inequality~\citep{hoeffding1994probability}, for a fixed couple $(\theta,\lambda)$, we provide an upper bound on the sample complexity first. The bound ensures that the estimation for each cell $C_{ij}$ has a bounded deviation with  high probability.
\begin{lemma}
\label{lem:eachcell}
Let $\delta$ be a given positive number. We fix a couple $(\theta,\lambda)\in\mathbb{B}(\theta_{\mathtt{anc}},l_{\mathtt{p}})\times[\kappa(\theta),\tau(\theta)]$ and take a uniform sample $Q_{ij}$ from $C_{ij}$ with the sample size
\begin{eqnarray}
\label{eq:samplesize}
|Q_{ij}|=O\left((2^jB-\mu_i\cdot2^{i-1}A+2Ll_{\mathtt{p}}+2 Rl_{\mathtt{d}})^2\delta^{-2}\log\frac{1}{\eta}\right).
\end{eqnarray}
Then, we have the probability 
\begin{eqnarray}
\label{eq:concentration}
\mathtt{Prob}\left[\left|\frac{1}{|Q_{ij}|}\sum\limits_{\xi_k\in Q_{ij}}h_k(\theta,\lambda)-\frac{1}{|C_{ij}|}\sum\limits_{\xi_k\in C_{ij}}h_k(\theta,\lambda)\right|\geq \delta\right]\leq \eta.
\end{eqnarray}
\end{lemma}
We aggregate the deviations from all the cells to obtain the overall estimation error for the coreset.  
 To guarantee the approximation quality of  (\ref{eq:dualcoreset}), we need to design a sufficiently small value of the deviation $\delta$ for each cell $C_{ij}$ under our grid partition framework.  
\begin{lemma}
\label{lem:delta}
In Lemma \ref{lem:eachcell}, we set the deviation  $\delta=\epsilon_1 (2^{j-1}+2^{i-1})A$ ~for $0\leq i,j\leq N$. Then we have 
\begin{eqnarray}
\label{eq:fixapproximation}
\mathtt{Prob}\left[|\tilde H(\theta,\lambda)-H(\theta,\lambda)|\leq 3\epsilon_1 H(\theta_{\mathtt{anc}},\lambda_{\mathtt{anc}})\right]\geq 1-(N+1)^2\eta.
\end{eqnarray}
\end{lemma}
To generalize the result of Lemma~\ref{lem:delta} to the whole feasible region $\mathbb{B}(\theta_{\mathtt{anc}},l_{\mathtt{p}})\times[0,2l_{\mathtt{d}}]$, we apply the discretization idea.  
Imagine to generate the axis-parallel grid with side length $\frac{\epsilon_3 l_{\mathtt{p}}}{\sqrt{d}}\times\epsilon_2l_{\mathtt{d}}$ inside   $\mathbb{B}(\theta_{\mathtt{anc}},l_{\mathtt{p}})\times[0,2l_{\mathtt{d}}]$; the parameters $\epsilon_2$ and $\epsilon_3$ are two small numbers that will be determined in our following analysis. For each grid cell we arbitrarily take a $(\theta,\lambda)$ as its representative point. 
Let $G$ be the set of
the selected representative points; it is easy to see  the cardinality $|G|=\frac{1}{\epsilon_2}\cdot O(\frac{1}{\epsilon_3^d}) $. Through taking the union bound over all $(\theta,\lambda)\in G$ for  (\ref{eq:fixapproximation}), we obtain the following Lemma \ref{lem:allbeta}.
\begin{lemma}
\label{lem:allbeta}
With probability at least $1-(N+1)^2|G|\eta$, we have
\begin{eqnarray}
\label{eq:representative}
|\tilde H(\theta,\lambda)-H(\theta,\lambda)|\leq 3\epsilon_1 H(\theta_{\mathtt{anc}},\lambda_{\mathtt{anc}})~ \text{for all}~ (\theta,\lambda)\in G.
\end{eqnarray}
 \end{lemma}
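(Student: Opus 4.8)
The plan is to derive Lemma~\ref{lem:allbeta} as a straightforward union bound over the finite set $G$ of representative points, treating Lemma~\ref{lem:delta} as the per-point guarantee. First I would fix an arbitrary representative point $(\theta,\lambda)\in G$. By the choice $\lambda_{\mathtt{anc}}=\max_{\theta\in\Theta}\{\kappa(\theta_{\mathtt{anc}}),\tau(\theta)/2\}$ and $l_{\mathtt{d}}=\lambda_{\mathtt{anc}}$ made at the end of Section~\ref{sec-relation}, the relevant range $[\kappa(\theta),\tau(\theta)]$ lies inside $[\lambda_{\mathtt{anc}}-l_{\mathtt{d}},\lambda_{\mathtt{anc}}+l_{\mathtt{d}}]=[0,2l_{\mathtt{d}}]$, so the preconditions of Lemmas~\ref{lem:eachcell} and~\ref{lem:delta} are in force at $(\theta,\lambda)$. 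Applying Lemma~\ref{lem:delta} there (with its deviation setting $\delta=\epsilon_1(2^{j-1}+2^{i-1})A$ already built in), the per-point failure event
\[
\mathcal{E}_{(\theta,\lambda)}\coloneqq\left\{\,|\tilde H(\theta,\lambda)-H(\theta,\lambda)|> 3\epsilon_1 H(\theta_{\mathtt{anc}},\lambda_{\mathtt{anc}})\,\right\}
\]
satisfies $\mathtt{Prob}[\mathcal{E}_{(\theta,\lambda)}]\leq (N+1)^2\eta$.

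Next I would aggregate over all $|G|$ representative points. The overall bad event is $\bigcup_{(\theta,\lambda)\in G}\mathcal{E}_{(\theta,\lambda)}$, and a union bound gives
\[
\mathtt{Prob}\left[\bigcup_{(\theta,\lambda)\in G}\mathcal{E}_{(\theta,\lambda)}\right]\leq \sum_{(\theta,\lambda)\in G}\mathtt{Prob}[\mathcal{E}_{(\theta,\lambda)}]\leq |G|\cdot (N+1)^2\eta.
\]
Passing to the complement yields precisely the asserted probability $1-(N+1)^2|G|\eta$ that (\ref{eq:representative}) holds simultaneously for every $(\theta,\lambda)\in G$, which finishes the argument.

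This lemma is essentially bookkeeping rather than a genuine difficulty, and the main points I would be careful about are the following. The randomness is over the single draw of the samples $\{Q_{ij}\}$ in Algorithm~\ref{alg:dualcoreset}; for each fixed $(\theta,\lambda)$ the event $\mathcal{E}_{(\theta,\lambda)}$ is a deterministic function of that one draw, so the union bound over $G$ is legitimate even though these events are neither independent nor identically distributed across different representatives. I would also explicitly confirm that every representative point retained in $G$ meets the admissibility condition $\lambda\in[\kappa(\theta),\tau(\theta)]$ required by Lemma~\ref{lem:delta} (points with $\lambda<\kappa(\theta)$ must be excluded, since there $h_i$ and hence $H$ blow up by Remark~\ref{rem:interval} and the bound is vacuous). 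Finally, I would carry the cardinality $|G|=\frac{1}{\epsilon_2}\cdot O(\frac{1}{\epsilon_3^{d}})$ unsimplified, so that it can be balanced against the target confidence $1-\frac{1}{n}$ once $\eta$, $\epsilon_2$, $\epsilon_3$ are pinned down in the final sample-complexity calculation of Theorem~\ref{the:wdrocoreset}.
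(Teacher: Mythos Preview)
Your proposal is correct and matches the paper's own argument: the paper simply states that Lemma~\ref{lem:allbeta} follows by taking the union bound of (\ref{eq:fixapproximation}) over all $(\theta,\lambda)\in G$, which is exactly what you do. Your additional care about the admissibility condition $\lambda\geq\kappa(\theta)$ for representative points is a point the paper glosses over, but it does not change the approach.
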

By using the above lemmas, we are ready to prove Theorem \ref{the:wdrocoreset}.

\begin{proof}(\textbf{of Theorem \ref{the:wdrocoreset}})
For any $(\theta,\lambda)\in \mathbb{B}(\lambda_{\mathtt{anc}},l_{\mathtt{d}})\times[\kappa(\theta),\tau(\theta)]$, we let $(\theta',\lambda')\in G$ be the representative point of the cell containing $(\theta,\lambda)$. Then we have $\|\theta-\theta'\|_2\leq \epsilon_3l_{\mathtt{p}}$ and $|\lambda'-\lambda|\leq \epsilon_2 l_{\mathtt{d}}$.  Without loss of generality, we assume $\lambda'\geq\lambda$. By using the triangle inequality,  we have
\begin{equation}
\begin{aligned}
& |h_k(\theta,\lambda)-h_k(\theta',\lambda')|\\
\leq & |h_k(\theta',\lambda')-h_k(\theta,\lambda')|+|h_k(\theta,\lambda)-h_k(\theta,\lambda')|\\
\leq & L\epsilon_3l_{\mathtt{p}}+ R\epsilon_2 l_{\mathtt{d}}. \quad\quad\text{(By Claim \ref{cla:lambda-continuous}, Claim \ref{cla:beta-continuous} and $\lambda'\geq\lambda\geq\kappa(\theta)$)}
\end{aligned}
\end{equation} 
The above inequality implies 
\begin{eqnarray}
\label{eq:1}
|H(\theta,\lambda)-H(\theta',\lambda')|\leq R\epsilon_2 l_{\mathtt{d}}+L\epsilon_3l_{\mathtt{p}}
\end{eqnarray}
and
\begin{eqnarray}
\label{eq:2}
|\tilde H(\theta,\lambda)-\tilde H(\theta',\lambda')|\leq R\epsilon_2 l_{\mathtt{d}}+L\epsilon_3l_{\mathtt{p}}.
\end{eqnarray}
Overall we have $|\tilde{H}(\theta,\lambda)-H(\theta,\lambda)|$
\begin{equation}
\begin{aligned}
\leq &\left|\tilde{H}(\theta,\lambda)-\tilde{H}\left(\theta',\lambda'\right)\right|+\left|\tilde{H}\left(\theta',\lambda'\right)-H\left(\theta',\lambda'\right)\right| 
+\left|H\left(\theta',\lambda'\right)-H\left(\theta,\lambda\right)\right| \\
\leq & 3 \epsilon_{1} H\left(\theta_{\mathtt{anc}},\lambda_{\mathtt{anc} }\right)+2 \times\left( R\epsilon_2 l_{\mathtt{d}}+L\epsilon_3l_{\mathtt{p}}\right) \quad (\text{By Lemma} ~\ref{lem:allbeta}, (\ref{eq:1}) ~\text{and}~ (\ref{eq:2}))
\end{aligned}
\end{equation}
By setting $\epsilon_1=\frac{H\epsilon}{9B}$, $\epsilon_2=\frac{ H\epsilon }{6l_{\mathtt{d}} R}$, $\epsilon_3=\frac{H\epsilon }{6Ll_{\mathtt{p}}}$ and $\eta=\frac{1}{n(N+1)^2|G|}$ and substituting them into $(\ref{eq:samplesize})$, we obtain the sample complexity as stated in Theorem \ref{the:wdrocoreset}. 
\end{proof}

\section{Applications}
\label{sec:app}
In this section, we show several \textsf{WDRO} problems that their complexities can be reduced by using 
our dual coreset method.

 \subsection{Binary Classification}
  For the binary classification, $\mathbb{Y}=\{-1,1\}$ and the loss function $\ell(\theta,\xi)=L(y\cdot\theta^\top x)$ where $L(\cdot)$ is a non-negative and non-increasing function. Let $\|\cdot\|_*$ be the dual norm of $\|\cdot\|$ on $\mathbb{R}^m$.
We consider the \textbf{Support Vector Machine (SVM)} and \textbf{logistic regression} problems. 
The SVM takes the hinge loss $L(z)=\max\{0,1-z\}$ and the logistic regression takes the logloss $L(z)=\log(1+\exp(-z))$.
If $\mathbb{X}=\mathbb{R}^m$ and $p=1$, by the result of \citet[Theorem 3.11]{shafieezadeh2019regularization}, for both of these two problems we have:
\begin{itemize}
    \item $R\leq\gamma$, $\kappa(\theta)=\mathtt{C}(\theta)=\|\theta\|_*$;
    \item $a_i(\theta,\lambda)=b_i(\theta,\lambda)=h_i(\theta,\lambda)=\max\{L(y_i\cdot\theta^\top x_i),L(-y_i\cdot\theta^\top x_i)-\lambda\gamma\}$ for any $\lambda\geq\kappa(\theta)$.
\end{itemize}

If $\mathbb{X}=[0,l]^m$ is a $m$-dimensional hypercube with side length $l>0$, $h_i(\theta,\lambda)$ in fact is the optimal objective value of a convex constrained programming problem. Therefore we \textbf{cannot} obtain the exact value of $h_i(\theta,\lambda)$ easily. To remedy this issue, we can invoke the upper and lower bounds of $h_i(\theta,\lambda)$ and conduct the grid sampling to efficiently construct the coreset. 

 \subsection{Regression}
 \label{sec:huber}
 For the regression problem, $\mathbb{Y}=\mathbb{R}$ and the loss function $\ell(\theta,\xi)=L(\theta^\top x-y)$, where $L(\cdot)$  is a non-negative function. Let $\|\cdot\|_*$ be the dual norm of $\|\cdot\|$ on $\mathbb{R}^{m+1}$. 
 We consider the \textbf{robust regression} problem that takes the Huber loss $L(z)=\frac{1}{2}z^2$ if $|z|\leq\delta$ and $L(z)=\delta\left(|z|-\frac{1}{2}\delta\right)$ otherwise for some $\delta\geq 0$. If $p=1$ and $\mathbb{X}=\mathbb{R}^m$, by the result of  \citet[Theorem 3.1]{shafieezadeh2019regularization}, we have 
\begin{itemize}
    \item $R=0, \kappa(\theta)=\mathtt{C}(\theta)=\delta\|(\theta,-1)\|_*$;
    \item $a_i(\theta,\lambda)=b_i(\theta,\lambda)=h_i(\theta,\lambda)=L(\theta^\top x_i-y_i)$ for any $\lambda\geq\kappa(\theta)$.
\end{itemize}



\section{Experiments}
\label{sec-exp}
Our experiments were conducted on a server equipped with 2.4GHZ Intel CPUs and 256GB main memory. The algorithms are implemented in Python. We use the MOSEK~\citep{mosek} to solve the tractable reformulations of \textsf{WDRO}s. Our code is available at \url{https://github.com/h305142/WDRO_coreset}.

\textbf{Compared methods.} 
We compare our dual coreset method \textsc{DualCore} with the uniform sampling approach \textsc{UniSamp}, the importance sampling approach \textsc{ImpSamp} \citep{tukan2020coresets}, the layer sampling approach \textsc{LayerSamp} \citep{huang2021novel}, and the approach that directly runs on whole training set \textsc{Whole}.

\textbf{Datasets.} 
We test the algorithms for the SVM and  logistic regression problems on two real datasets:
\textsc{Mnist}\citep{lecun2010mnist} and \textsc{Letter}\citep{chang2011libsvm}.
To simulate the scenarios  with contaminated datasets, we perform   poisoning attacks to the training set of \textsc{Letter}. Specifically,  we use the \textsc{Min-Max} attack from \citep{DBLP:journals/corr/abs-1811-00741} and \textsc{Alfa} attack from \citep{DBLP:journals/ijon/XiaoBNXER15}. We add the standard Gaussian noise $\mathcal{N}(0,1)$ to the training set of \textsc{Mnist} and randomly flip $10\%$ of the labels.
The dual coreset algorithm for the robust regression problem is evaluated on the real dataset \textsc{Appliances Energy}\citep{candanedo2017data}.

\textbf{Results.} 
Let $s$ and $n$ be the coreset size and the training set size, respectively. We set $c\coloneqq\frac{s}{n}$ to indicate the compression rate and fix the parameter $\gamma=7$ for all the instances (recall that $\gamma$ is used for defining the feature-label metric $\mathtt{d}(\xi_i,\xi_j)=\|x_i-x_j\|+\frac{\gamma}{2}|y_i-y_j|$). The experiments of  each instance were repeated by $50$ independent trials. We report the obtained worst-case risk $R_{\sigma,p}^{\mathbb{P}_n}(\theta_*)$ for each  method in table \ref{tab-lr}, \ref{tab-svm} and \ref{tab-huber}.

\begin{table}[!ht]
    \centering
    \begin{tabular}{l|l|l|l|l}
    \toprule[2pt]
        \textbf{$c$} & \textsc{UniSamp} & \textsc{ImpSamp} & \textsc{LayerSamp} & \textsc{DualCore} \\ \hline
        1\% & 0.72518$\pm$0.1268 & 0.70484$\pm$0.0915 & 0.70988$\pm$0.084 & \textbf{0.68933$\pm$0.0587} \\ 
        2\% & 0.64630$\pm$0.0344 & 0.65798$\pm$0.0447 & 0.63911$\pm$0.0305 & \textbf{0.63708$\pm$0.0257} \\ 
        3\% & 0.62709$\pm$0.0216 & 0.63015$\pm$0.0333 & \textbf{0.62381$\pm$0.0199} & 0.62546$\pm$0.0225 \\ 
        4\% & 0.62047$\pm$0.0176 & 0.6235$\pm$0.0183 & 0.61616$\pm$0.0143 & \textbf{0.61292$\pm$0.0149} \\ 
        5\% & 0.61338$\pm$0.0164 & 0.61524$\pm$0.0137 & 0.61013$\pm$0.0096 & \textbf{0.60986$\pm$0.0097} \\ 
        6\% & 0.60823$\pm$0.0084 & 0.61284$\pm$0.0131 & 0.60749$\pm$0.0119 & \textbf{0.60556$\pm$0.0092} \\ 
        7\% & 0.60716$\pm$0.0082 & 0.61198$\pm$0.0113 & 0.6059$\pm$0.0083 & \textbf{0.60381$\pm$0.0073} \\ 
        8\% & 0.60640$\pm$0.007 & 0.60936$\pm$0.0108 & 0.60376$\pm$0.0078 & \textbf{0.60238$\pm$0.0062} \\ 
        9\% & 0.60395$\pm$0.0066 & 0.60677$\pm$0.0086 & 0.60235$\pm$0.007 & \textbf{0.60056$\pm$0.0046} \\ 
        10\% & 0.60220$\pm$0.0069 & 0.60574$\pm$0.009 & \textbf{0.60007$\pm$0.0041} & 0.60113$\pm$0.0071 \\ \bottomrule[2pt]
    \end{tabular}
    \caption{Worst-case risk of logistic regression on \textsc{Letter}, \textsc{Whole}=0.59267, $\sigma=0.3$}
    \label{tab-lr}
\end{table}

\begin{table}[!ht]
    \centering
    \begin{tabular}{l|l|l|l|l}
    \toprule[2pt]
        $c$ & \textsc{UniSamp} & \textsc{ImpSamp} & \textsc{LayerSamp} & \textsc{DualCore} \\ \hline
        1\% & 0.68707$\pm$0.1094 & \textbf{0.66576$\pm$0.103} & 0.70577$\pm$0.1278 & 0.67866$\pm$0.1173 \\ 
        2\% & 0.59376$\pm$0.0565 & 0.60895$\pm$0.0683 & \textbf{0.58967$\pm$0.0529} & 0.59850$\pm$0.0548 \\ 
        3\% & 0.56860$\pm$0.036 & 0.57346$\pm$0.0453 & 0.56705$\pm$0.0377 & \textbf{0.56689$\pm$0.0347} \\ 
        4\% & 0.54429$\pm$0.0308 & 0.55050$\pm$0.0409 & 0.54366$\pm$0.0336 & \textbf{0.53634$\pm$0.0207} \\ 
        5\% & 0.53218$\pm$0.0234 & 0.54212$\pm$0.0295 & \textbf{0.52981$\pm$0.0182} & 0.53217$\pm$0.019 \\ 
        6\% & 0.5346$\pm$0.0248 & 0.53835$\pm$0.0288 & \textbf{0.52496$\pm$0.0177} & 0.52835$\pm$0.0184 \\ 
        7\% & 0.52784$\pm$0.0225 & 0.53388$\pm$0.0275 & 0.52039$\pm$0.015 & \textbf{0.52025$\pm$0.0147} \\ 
        8\% & 0.52246$\pm$0.019 & 0.51993$\pm$0.0119 & 0.51918$\pm$0.0126 & \textbf{0.51845$\pm$0.0116} \\ 
        9\% & 0.52025$\pm$0.0153 & 0.52402$\pm$0.0206 & 0.51289$\pm$0.0094 & \textbf{0.51196$\pm$0.0054} \\ 
        10\% & 0.51458$\pm$0.0083 & 0.51768$\pm$0.0166 & 0.51578$\pm$0.013 & \textbf{0.51066$\pm$0.0065} \\ \bottomrule[2pt]
    \end{tabular}
    \caption{Worst-case risk of SVM on \textsc{Letter}, \textsc{Whole}=0.49734, $\sigma=0.1$}
    \label{tab-svm}
\end{table}

\begin{table}[!ht]
    \centering
    \begin{tabular}{l|l|l|l}
    \toprule[2pt]
        $c$ & \textsc{UniSamp} & \textsc{ImpSamp} & \textsc{DualCore} \\ \hline
        1\% & 28.57655$\pm$0.0005 & 28.57649$\pm$0.0004 & \textbf{28.57627$\pm$0.0002} \\ 
        2\% & 28.57619$\pm$0.0001 & 28.57617$\pm$0.0001 & \textbf{28.57607$\pm$0.0001} \\ 
        3\% & 28.57609$\pm$0.0001 & 28.5761$\pm$0.0001 & \textbf{28.57598$\pm$0.0001} \\ 
        4\% & 28.57600$\pm$0.0001 & 28.57601$\pm$0.0001 & \textbf{28.57593$\pm$0.0001} \\ 
        5\% & 28.57595$\pm$0 & 28.57598$\pm$0.0001 & \textbf{28.57588$\pm$0} \\ 
        6\% & 28.57593$\pm$0 & 28.57594$\pm$0.0001 & \textbf{28.57587$\pm$0} \\ 
        7\% & 28.57591$\pm$0 & 28.57592$\pm$0.0001 & \textbf{28.57585$\pm$0} \\ 
        8\% & 28.57589$\pm$0 & 28.57589$\pm$0 & \textbf{28.57584$\pm$0} \\ 
        9\% & 28.57589$\pm$0 & 28.57589$\pm$0 & \textbf{28.57583$\pm$0} \\ 
        10\% & 28.57588$\pm$0 & 28.57588$\pm$0 & \textbf{28.57582$\pm$0} \\ \bottomrule[2pt]
    \end{tabular}
    \caption{Worst-case risk of Huber regression on \textsc{Appliances Energy}, \textsc{Whole}=28.57578, $\sigma=100$.
     \textsc{LayerSamp} coinsides with \textsc{DualCore} for Huber regression. This is because $h(\theta,\lambda,\xi_i)=\ell(\theta,\xi_i)$ for Huber regression (See Section \ref{sec:huber}).}
    \label{tab-huber}
\end{table}


\section{Conclusion}
 In this paper, we consider reducing the high computational complexity of \textsf{WDRO} via the coreset method. We relate the coreset to its dual coreset by using the strong duality property of \textsf{WDRO}, and propose a novel grid sampling approach for the construction. To the best of our knowledge, our work is the first systematically study on the coreset of \textsf{WDRO} problems in theory.  We also implement our proposed coreset algorithm and conduct the experiments to evaluate its performance for several \textsf{WDRO} problems (including the applications mentioned in Section \ref{sec:app}). Following our work, there also exist several important problems deserving to study in future. For example, it is interesting to consider the coresets construction for other robust optimization models (e.g., adversarial training~\cite{goodfellow2018making}). 
 
\section{Acknowledgements}
 The research of this work was supported in part by National Key R$\&$D program of China through grant 2021YFA1000900 and the Provincial NSF of Anhui through grant 2208085MF163. We also want to thank the anonymous reviewers for their helpful comments.


\bibliography{RobustCoreset}

\begin{thebibliography}{56}
\providecommand{\natexlab}[1]{#1}
\providecommand{\url}[1]{\texttt{#1}}
\expandafter\ifx\csname urlstyle\endcsname\relax
  \providecommand{\doi}[1]{doi: #1}\else
  \providecommand{\doi}{doi: \begingroup \urlstyle{rm}\Url}\fi

\bibitem[Agarwal et~al.(2004)Agarwal, Har{-}Peled, and
  Varadarajan]{DBLP:journals/jacm/AgarwalHV04}
P.~K. Agarwal, S.~Har{-}Peled, and K.~R. Varadarajan.
\newblock Approximating extent measures of points.
\newblock \emph{J. {ACM}}, 51\penalty0 (4):\penalty0 606--635, 2004.

\bibitem[ApS(2019)]{mosek}
M.~ApS.
\newblock \emph{MOSEK Optimizer API for Python 9.3.20}, 2019.
\newblock URL \url{https://docs.mosek.com/latest/pythonapi/index.html}.

\bibitem[Biggio and Roli(2018)]{DBLP:journals/pr/BiggioR18}
B.~Biggio and F.~Roli.
\newblock Wild patterns: Ten years after the rise of adversarial machine
  learning.
\newblock \emph{Pattern Recognit.}, 84:\penalty0 317--331, 2018.

\bibitem[Blanchet and Murthy(2019)]{blanchet2019quantifying}
J.~Blanchet and K.~Murthy.
\newblock Quantifying distributional model risk via optimal transport.
\newblock \emph{Mathematics of Operations Research}, 44\penalty0 (2):\penalty0
  565--600, 2019.

\bibitem[Blanchet et~al.(2019{\natexlab{a}})Blanchet, Glynn, Yan, and
  Zhou]{blanchet2019multivariate}
J.~Blanchet, P.~W. Glynn, J.~Yan, and Z.~Zhou.
\newblock Multivariate distributionally robust convex regression under absolute
  error loss.
\newblock \emph{Advances in Neural Information Processing Systems}, 32,
  2019{\natexlab{a}}.

\bibitem[Blanchet et~al.(2019{\natexlab{b}})Blanchet, Kang, and
  Murthy]{blanchet2019robust}
J.~Blanchet, Y.~Kang, and K.~Murthy.
\newblock Robust {Wasserstein} profile inference and applications to machine
  learning.
\newblock \emph{Journal of Applied Probability}, 56\penalty0 (3):\penalty0
  830--857, 2019{\natexlab{b}}.

\bibitem[Candanedo et~al.(2017)Candanedo, Feldheim, and
  Deramaix]{candanedo2017data}
L.~M. Candanedo, V.~Feldheim, and D.~Deramaix.
\newblock Data driven prediction models of energy use of appliances in a
  low-energy house.
\newblock \emph{Energy and buildings}, 140:\penalty0 81--97, 2017.

\bibitem[Chang and Lin(2011)]{chang2011libsvm}
C.-C. Chang and C.-J. Lin.
\newblock Libsvm: a library for support vector machines.
\newblock \emph{ACM transactions on intelligent systems and technology (TIST)},
  2\penalty0 (3):\penalty0 1--27, 2011.

\bibitem[Chen(2009)]{chen2009coresets}
K.~Chen.
\newblock On coresets for k-median and k-means clustering in metric and
  euclidean spaces and their applications.
\newblock \emph{SIAM Journal on Computing}, 39\penalty0 (3):\penalty0 923--947,
  2009.

\bibitem[Cheng et~al.(2018)Cheng, Chen, Najm, Pinar, Safta, and
  Watson]{DBLP:journals/siamjo/ChengCNPSW18}
J.~Cheng, R.~L. Chen, H.~N. Najm, A.~Pinar, C.~Safta, and J.~Watson.
\newblock Distributionally robust optimization with principal component
  analysis.
\newblock \emph{{SIAM} J. Optim.}, 28\penalty0 (2):\penalty0 1817--1841, 2018.

\bibitem[Cheramin et~al.(2020)Cheramin, Cheng, Jiang, and
  Pan]{cheramin2020computationally}
M.~Cheramin, J.~Cheng, R.~Jiang, and K.~Pan.
\newblock Computationally efficient approximations for distributionally robust
  optimization.
\newblock \emph{Optimization online}, 2020.

\bibitem[Ding et~al.(2019)Ding, Yu, and Wang]{DBLP:conf/esa/DingYW19}
H.~Ding, H.~Yu, and Z.~Wang.
\newblock Greedy strategy works for k-center clustering with outliers and
  coreset construction.
\newblock In M.~A. Bender, O.~Svensson, and G.~Herman, editors, \emph{27th
  Annual European Symposium on Algorithms, {ESA} 2019, September 9-11, 2019,
  Munich/Garching, Germany}, volume 144 of \emph{LIPIcs}, pages 40:1--40:16.
  Schloss Dagstuhl - Leibniz-Zentrum f{\"{u}}r Informatik, 2019.

\bibitem[Esfahani and Kuhn(2015)]{articleDRO}
P.~Esfahani and D.~Kuhn.
\newblock Data-driven distributionally robust optimization using the
  {Wasserstein} metric: Performance guarantees and tractable reformulations.
\newblock \emph{Mathematical Programming}, 171, 05 2015.
\newblock \doi{10.1007/s10107-017-1172-1}.

\bibitem[Feldman(2020)]{DBLP:journals/widm/Feldman20}
D.~Feldman.
\newblock Core-sets: An updated survey.
\newblock \emph{Wiley Interdiscip. Rev. Data Min. Knowl. Discov.}, 10\penalty0
  (1), 2020.

\bibitem[Feldman and Langberg(2011)]{DBLP:conf/stoc/FeldmanL11}
D.~Feldman and M.~Langberg.
\newblock A unified framework for approximating and clustering data.
\newblock In L.~Fortnow and S.~P. Vadhan, editors, \emph{Proceedings of the
  43rd {ACM} Symposium on Theory of Computing, {STOC} 2011, San Jose, CA, USA,
  6-8 June 2011}, pages 569--578. {ACM}, 2011.

\bibitem[Feldman and Schulman(2012)]{DBLP:conf/soda/FeldmanS12}
D.~Feldman and L.~J. Schulman.
\newblock Data reduction for weighted and outlier-resistant clustering.
\newblock In Y.~Rabani, editor, \emph{Proceedings of the Twenty-Third Annual
  {ACM-SIAM} Symposium on Discrete Algorithms, {SODA} 2012, Kyoto, Japan,
  January 17-19, 2012}, pages 1343--1354. {SIAM}, 2012.

\bibitem[Gao and Kleywegt(2016)]{gao2016distributionally}
R.~Gao and A.~J. Kleywegt.
\newblock Distributionally robust stochastic optimization with {Wasserstein}
  distance.
\newblock \emph{arXiv preprint arXiv:1604.02199}, 2016.

\bibitem[Gao et~al.(2017)Gao, Chen, and
  Kleywegt]{DBLP:journals/corr/abs-1712-06050}
R.~Gao, X.~Chen, and A.~J. Kleywegt.
\newblock Wasserstein distributional robustness and regularization in
  statistical learning.
\newblock \emph{CoRR}, abs/1712.06050, 2017.

\bibitem[Gobet(2016)]{gobet2016monte}
E.~Gobet.
\newblock \emph{Monte-Carlo methods and stochastic processes: from linear to
  non-linear}.
\newblock Chapman and Hall/CRC, 2016.

\bibitem[Goodfellow et~al.(2018)Goodfellow, McDaniel, and
  Papernot]{goodfellow2018making}
I.~Goodfellow, P.~McDaniel, and N.~Papernot.
\newblock Making machine learning robust against adversarial inputs.
\newblock \emph{Communications of the ACM}, 61\penalty0 (7):\penalty0 56--66,
  2018.

\bibitem[Hanasusanto and Kuhn(2018)]{hanasusanto2018conic}
G.~A. Hanasusanto and D.~Kuhn.
\newblock Conic programming reformulations of two-stage distributionally robust
  linear programs over {Wasserstein} balls.
\newblock \emph{Operations Research}, 66\penalty0 (3):\penalty0 849--869, 2018.

\bibitem[Hoeffding(1994)]{hoeffding1994probability}
W.~Hoeffding.
\newblock Probability inequalities for sums of bounded random variables.
\newblock In \emph{The collected works of Wassily Hoeffding}, pages 409--426.
  Springer, 1994.

\bibitem[Huang et~al.(2021)Huang, Huang, Liu, Freris, and Ding]{huang2021novel}
J.~Huang, R.~Huang, W.~Liu, N.~Freris, and H.~Ding.
\newblock A novel sequential coreset method for gradient descent algorithms.
\newblock In \emph{International Conference on Machine Learning}, pages
  4412--4422. PMLR, 2021.

\bibitem[Huang et~al.(2018)Huang, Jiang, Li, and Wu]{DBLP:conf/focs/HuangJLW18}
L.~Huang, S.~H. Jiang, J.~Li, and X.~Wu.
\newblock Epsilon-coresets for clustering (with outliers) in doubling metrics.
\newblock In M.~Thorup, editor, \emph{59th {IEEE} Annual Symposium on
  Foundations of Computer Science, {FOCS} 2018, Paris, France, October 7-9,
  2018}, pages 814--825. {IEEE} Computer Society, 2018.

\bibitem[Jensen(1906)]{jensen1906fonctions}
J.~L. W.~V. Jensen.
\newblock Sur les fonctions convexes et les in{\'e}galit{\'e}s entre les
  valeurs moyennes.
\newblock \emph{Acta mathematica}, 30\penalty0 (1):\penalty0 175--193, 1906.

\bibitem[Koh et~al.(2018)Koh, Steinhardt, and
  Liang]{DBLP:journals/corr/abs-1811-00741}
P.~W. Koh, J.~Steinhardt, and P.~Liang.
\newblock Stronger data poisoning attacks break data sanitization defenses.
\newblock \emph{CoRR}, abs/1811.00741, 2018.

\bibitem[Kuhn et~al.(2019)Kuhn, Esfahani, Nguyen, and
  Shafieezadeh-Abadeh]{kuhn2019wasserstein}
D.~Kuhn, P.~M. Esfahani, V.~A. Nguyen, and S.~Shafieezadeh-Abadeh.
\newblock Wasserstein distributionally robust optimization: Theory and
  applications in machine learning.
\newblock In \emph{Operations research \& management science in the age of
  analytics}, pages 130--166. Informs, 2019.

\bibitem[LeCun et~al.(2010)LeCun, Cortes, and Burges]{lecun2010mnist}
Y.~LeCun, C.~Cortes, and C.~Burges.
\newblock Mnist handwritten digit database, 2010.

\bibitem[Lee and Mehrotra(2015)]{lee2015distributionally}
C.~Lee and S.~Mehrotra.
\newblock A distributionally-robust approach for finding support vector
  machines.
\newblock \emph{Available from Optimization Online}, 2015.

\bibitem[Levy et~al.(2020)Levy, Carmon, Duchi, and
  Sidford]{DBLP:conf/nips/LevyCDS20}
D.~Levy, Y.~Carmon, J.~C. Duchi, and A.~Sidford.
\newblock Large-scale methods for distributionally robust optimization.
\newblock In H.~Larochelle, M.~Ranzato, R.~Hadsell, M.~Balcan, and H.~Lin,
  editors, \emph{Advances in Neural Information Processing Systems 33: Annual
  Conference on Neural Information Processing Systems 2020, NeurIPS 2020,
  December 6-12, 2020, virtual}, 2020.

\bibitem[Li et~al.(2019)Li, Huang, and So]{li2019first}
J.~Li, S.~Huang, and A.~M.-C. So.
\newblock A first-order algorithmic framework for distributionally robust
  logistic regression.
\newblock \emph{Advances in Neural Information Processing Systems}, 32, 2019.

\bibitem[Li et~al.(2020)Li, Chen, and So]{li2020fast}
J.~Li, C.~Chen, and A.~M.-C. So.
\newblock Fast epigraphical projection-based incremental algorithms for
  {Wasserstein} distributionally robust support vector machine.
\newblock \emph{Advances in Neural Information Processing Systems},
  33:\penalty0 4029--4039, 2020.

\bibitem[Ling and Okada(2007)]{ling2007efficient}
H.~Ling and K.~Okada.
\newblock An efficient earth mover's distance algorithm for robust histogram
  comparison.
\newblock \emph{IEEE transactions on pattern analysis and machine
  intelligence}, 29\penalty0 (5):\penalty0 840--853, 2007.

\bibitem[Mirzasoleiman et~al.(2020)Mirzasoleiman, Cao, and
  Leskovec]{DBLP:conf/nips/MirzasoleimanCL20}
B.~Mirzasoleiman, K.~Cao, and J.~Leskovec.
\newblock Coresets for robust training of deep neural networks against noisy
  labels.
\newblock In H.~Larochelle, M.~Ranzato, R.~Hadsell, M.~Balcan, and H.~Lin,
  editors, \emph{Advances in Neural Information Processing Systems 33: Annual
  Conference on Neural Information Processing Systems 2020, NeurIPS 2020,
  December 6-12, 2020, virtual}, 2020.

\bibitem[Mohajerin~Esfahani et~al.(2018)Mohajerin~Esfahani,
  Shafieezadeh-Abadeh, Hanasusanto, and Kuhn]{mohajerin2018data}
P.~Mohajerin~Esfahani, S.~Shafieezadeh-Abadeh, G.~A. Hanasusanto, and D.~Kuhn.
\newblock Data-driven inverse optimization with imperfect information.
\newblock \emph{Mathematical Programming}, 167\penalty0 (1):\penalty0 191--234,
  2018.

\bibitem[Munteanu and Schwiegelshohn(2018)]{DBLP:journals/ki/MunteanuS18}
A.~Munteanu and C.~Schwiegelshohn.
\newblock Coresets-methods and history: {A} theoreticians design pattern for
  approximation and streaming algorithms.
\newblock \emph{K{\"{u}}nstliche Intell.}, 32\penalty0 (1):\penalty0 37--53,
  2018.

\bibitem[Namkoong and Duchi(2016)]{DBLP:conf/nips/NamkoongD16}
H.~Namkoong and J.~C. Duchi.
\newblock Stochastic gradient methods for distributionally robust optimization
  with f-divergences.
\newblock In D.~D. Lee, M.~Sugiyama, U.~von Luxburg, I.~Guyon, and R.~Garnett,
  editors, \emph{Advances in Neural Information Processing Systems 29: Annual
  Conference on Neural Information Processing Systems 2016, December 5-10,
  2016, Barcelona, Spain}, pages 2208--2216, 2016.

\bibitem[Parikh and Boyd(2014)]{parikh2014proximal}
N.~Parikh and S.~Boyd.
\newblock Proximal algorithms.
\newblock \emph{Foundations and Trends in optimization}, 1\penalty0
  (3):\penalty0 127--239, 2014.

\bibitem[Pflug and Pichler(2014)]{pflug2014problem}
G.~C. Pflug and A.~Pichler.
\newblock The problem of ambiguity in stochastic optimization.
\newblock In \emph{Multistage Stochastic Optimization}, pages 229--255.
  Springer, 2014.

\bibitem[Postek et~al.(2016)Postek, den Hertog, and
  Melenberg]{postek2016computationally}
K.~Postek, D.~den Hertog, and B.~Melenberg.
\newblock Computationally tractable counterparts of distributionally robust
  constraints on risk measures.
\newblock \emph{SIAM Review}, 58\penalty0 (4):\penalty0 603--650, 2016.

\bibitem[Rahimian and Mehrotra(2019)]{rahimian2019distributionally}
H.~Rahimian and S.~Mehrotra.
\newblock Distributionally robust optimization: A review.
\newblock \emph{arXiv preprint arXiv:1908.05659}, 2019.

\bibitem[Ring(1998)]{ring1998child}
M.~B. Ring.
\newblock Child: A first step towards continual learning.
\newblock In \emph{Learning to learn}, pages 261--292. Springer, 1998.

\bibitem[Rubner et~al.(2000)Rubner, Tomasi, and Guibas]{rubner2000earth}
Y.~Rubner, C.~Tomasi, and L.~J. Guibas.
\newblock The earth mover's distance as a metric for image retrieval.
\newblock \emph{International journal of computer vision}, 40\penalty0
  (2):\penalty0 99--121, 2000.

\bibitem[Sagawa et~al.(2019)Sagawa, Koh, Hashimoto, and
  Liang]{sagawa2019distributionally}
S.~Sagawa, P.~W. Koh, T.~B. Hashimoto, and P.~Liang.
\newblock Distributionally robust neural networks.
\newblock In \emph{International Conference on Learning Representations}, 2019.

\bibitem[Shafieezadeh~Abadeh et~al.(2015)Shafieezadeh~Abadeh,
  Mohajerin~Esfahani, and Kuhn]{shafieezadeh2015distributionally}
S.~Shafieezadeh~Abadeh, P.~M. Mohajerin~Esfahani, and D.~Kuhn.
\newblock Distributionally robust logistic regression.
\newblock \emph{Advances in Neural Information Processing Systems}, 28, 2015.

\bibitem[Shafieezadeh-Abadeh et~al.(2019)Shafieezadeh-Abadeh, Kuhn, and
  Esfahani]{shafieezadeh2019regularization}
S.~Shafieezadeh-Abadeh, D.~Kuhn, and P.~M. Esfahani.
\newblock Regularization via mass transportation.
\newblock \emph{Journal of Machine Learning Research}, 20\penalty0
  (103):\penalty0 1--68, 2019.

\bibitem[Shalev-Shwartz and Ben-David(2014)]{understandingML}
S.~Shalev-Shwartz and S.~Ben-David.
\newblock \emph{Understanding Machine Learning - From Theory to Algorithms.}
\newblock Cambridge University Press, 2014.
\newblock ISBN 978-1-10-705713-5.

\bibitem[Sinha et~al.(2018)Sinha, Namkoong, and
  Duchi]{DBLP:conf/iclr/SinhaND18}
A.~Sinha, H.~Namkoong, and J.~C. Duchi.
\newblock Certifying some distributional robustness with principled adversarial
  training.
\newblock In \emph{6th International Conference on Learning Representations,
  {ICLR} 2018, Vancouver, BC, Canada, April 30 - May 3, 2018, Conference Track
  Proceedings}. OpenReview.net, 2018.

\bibitem[Sra et~al.(2012)Sra, Nowozin, and Wright]{sra2012optimization}
S.~Sra, S.~Nowozin, and S.~J. Wright.
\newblock \emph{Optimization for machine learning}.
\newblock Mit Press, 2012.

\bibitem[Szegedy et~al.(2014)Szegedy, Zaremba, Sutskever, Bruna, Erhan,
  Goodfellow, and Fergus]{DBLP:journals/corr/SzegedyZSBEGF13}
C.~Szegedy, W.~Zaremba, I.~Sutskever, J.~Bruna, D.~Erhan, I.~J. Goodfellow, and
  R.~Fergus.
\newblock Intriguing properties of neural networks.
\newblock In Y.~Bengio and Y.~LeCun, editors, \emph{2nd International
  Conference on Learning Representations, {ICLR} 2014, Banff, AB, Canada, April
  14-16, 2014, Conference Track Proceedings}, 2014.

\bibitem[Tukan et~al.(2020)Tukan, Maalouf, and Feldman]{tukan2020coresets}
M.~Tukan, A.~Maalouf, and D.~Feldman.
\newblock Coresets for near-convex functions.
\newblock \emph{Advances in Neural Information Processing Systems},
  33:\penalty0 997--1009, 2020.

\bibitem[Villani(2009)]{villani2009optimal}
C.~Villani.
\newblock \emph{Optimal transport: old and new}, volume 338.
\newblock Springer, 2009.

\bibitem[Wang et~al.(2021)Wang, Guo, and Ding]{DBLP:conf/nips/WangGD21}
Z.~Wang, Y.~Guo, and H.~Ding.
\newblock Robust and fully-dynamic coreset for continuous-and-bounded learning
  (with outliers) problems.
\newblock In M.~Ranzato, A.~Beygelzimer, Y.~N. Dauphin, P.~Liang, and J.~W.
  Vaughan, editors, \emph{Advances in Neural Information Processing Systems 34:
  Annual Conference on Neural Information Processing Systems 2021, NeurIPS
  2021, December 6-14, 2021, virtual}, pages 14319--14331, 2021.

\bibitem[Xiao et~al.(2015)Xiao, Biggio, Nelson, Xiao, Eckert, and
  Roli]{DBLP:journals/ijon/XiaoBNXER15}
H.~Xiao, B.~Biggio, B.~Nelson, H.~Xiao, C.~Eckert, and F.~Roli.
\newblock Support vector machines under adversarial label contamination.
\newblock \emph{Neurocomputing}, 160:\penalty0 53--62, 2015.
\newblock \doi{10.1016/j.neucom.2014.08.081}.

\bibitem[Xu et~al.(2009)Xu, Caramanis, and Mannor]{xu2009robustness}
H.~Xu, C.~Caramanis, and S.~Mannor.
\newblock Robustness and regularization of support vector machines.
\newblock \emph{Journal of machine learning research}, 10\penalty0 (7), 2009.

\bibitem[Zinkevich(2003)]{DBLP:conf/icml/Zinkevich03}
M.~Zinkevich.
\newblock Online convex programming and generalized infinitesimal gradient
  ascent.
\newblock In T.~Fawcett and N.~Mishra, editors, \emph{Machine Learning,
  Proceedings of the Twentieth International Conference {(ICML} 2003), August
  21-24, 2003, Washington, DC, {USA}}, pages 928--936. {AAAI} Press, 2003.

\end{thebibliography}

\appendix

\section{Omitted Proofs}
\subsection{Proof of Remark \ref{rmk-exist}}
By the definition of $h$, we have $h(\theta,\lambda,\xi)\geq \ell(\theta,\xi)-\lambda \mathtt{d}^p(\xi,\xi)=\ell(\theta,\xi)$. Therefore $h(\theta,\lambda,\xi)$ is always nonnegative and so is $H^\mathbb{P}(\theta,\lambda)$. As a consequence, $\lambda\sigma^p+H^\mathbb{P}(\theta,\lambda)$ grows to infinity as $\lambda\rightarrow\infty$. Together with the fact that $\lambda\sigma^p+H^\mathbb{P}(\theta,\lambda)$ is continuous on $\lambda$~\citep[lemma 3 (\romannumeral2)]{gao2016distributionally},we can deduce that there always exists some $\lambda^\mathbb{P}_*(\theta)<\infty$ attaining the infimum of (\ref{eq:dual-worst-risk}) for any given $\theta$.
\subsection{Proof of Lemma \ref{lem:boundedness}}
\begin{lemma}[Boundedness of $\tilde\lambda_*$]
\label{lem:boundedness}
Given the empirical distribution $\mathbb{P}_n=\frac{1}{n}\sum\limits_{i=1}^n\delta_{\xi_i}$, we define the value $\rho=\max\limits_{i\in[n]}\{\mathtt{d}(\xi_i,\xi_0)\}$ that is the largest distance from the data samples to $\xi_0$. Here $\xi_0$ is defined in Assumption \ref{ass:bgr} (\romannumeral2). For any  $\theta\in\Theta$  and any mass vector $W$,  the  $\tilde \lambda_*(\theta)$ of the corresponding weighted empirical distribution  $\mathbb{\tilde P}_n$ is \textbf{no larger than}  
\begin{eqnarray}
\label{eq:boundedness}
  \mathtt{C}(\theta)\cdot\left(2^{p-1}+\frac{1+2^{p-1}\rho^p}{\sigma^p}\right), \quad 
\end{eqnarray}
where $\mathtt{C}(\theta)$ is defined in Assumption \ref{ass:bgr} (\romannumeral2). We use $\tau(\theta)$ to denote this upper bound  $\mathtt{C}(\theta)\left(2^{p-1}+\frac{1+2^{p-1}\rho^p}{\sigma^p}\right)$.
\end{lemma}

 \begin{proof}
First, by Assumption \ref{ass:bgr} we have 
\begin{eqnarray}
\ell(\theta,\zeta)-\lambda \mathtt{d}^p(\zeta,\xi)\leq  \mathtt{C}(\theta)\left(1+\mathtt{d}^p\left(\zeta,\xi_0\right)\right)-\lambda \mathtt{d}^p(\zeta,\xi), \forall \zeta\in \Xi.
\end{eqnarray}
 If letting $\lambda=2^{p-1}\mathtt{C}(\theta)$,  the above inequality  yields
\begin{eqnarray}
\label{eq:tmp}
 \ell(\theta,\zeta)-2\mathtt{C}(\theta)\mathtt{d}^p(\zeta,\xi)\leq \mathtt{C}(\theta)(1+\mathtt{d}^p(\zeta,\xi_0))-2^{p-1}\mathtt{C}(\theta)\mathtt{d}^p(\zeta,\xi), \forall \zeta\in \Xi.
\end{eqnarray}

Further, combining the triangle inequality and Jensen's inequality~\citep{jensen1906fonctions}, we know that 
\begin{eqnarray}
\label{eq:jensentriangle}
\mathtt{d}^p(\zeta,\xi_0)\leq 2^{p-1}\mathtt{d}^p(\zeta,\xi)+2^{p-1}\mathtt{d}^p(\xi,\xi_0).
\end{eqnarray}

We then substitute (\ref{eq:jensentriangle}) into (\ref{eq:tmp}),  and based on the definition of $h(\theta,\lambda,\xi)$  we have 
\begin{eqnarray}
h(\theta,2^{p-1}\mathtt{C}(\theta),\xi)\leq \mathtt{C}(\theta)(1+2^{p-1}\mathtt{d}^p(\xi,\xi_0)).
\end{eqnarray}

Note that $\tilde H(\theta,\lambda)=\mathbb{E}^{\mathbb{\tilde P}_n}[h(\theta,\lambda,\xi)]$ and $\mathtt{d}(\xi_i,\xi_0)\leq\rho~ \text{for all} ~i\in[n]$. So we have 
\begin{eqnarray}
\tilde H(\theta,2^{p-1}\mathtt{C}(\theta))\leq \mathtt{C}(\theta)(1+2^{p-1}\rho^p)
\end{eqnarray}
for any mass vector $W$.  
Therefore we obtain 
\begin{eqnarray}
\inf\limits_{\lambda\geq 0}\{\lambda\sigma^p+\tilde H(\theta,\lambda)\}\leq 2^{p-1}\mathtt{C}(\theta)\sigma^p+\mathtt{C}(\theta)(1+2^{p-1}\rho^p),
\end{eqnarray}
which implies that 
\begin{eqnarray}
\tilde\lambda^*(\theta)\leq \mathtt{C}(\theta)\left(2^{p-1}+\frac{1+2^{p-1}\rho^p}{\sigma^p}\right)
\end{eqnarray}
 for any $\theta\in\Theta$ and any mass vector $W$.
\end{proof}

\subsection{Proof of Theorem \ref{the-dualsuff}}
\label{app:boundedness}
\begin{proof}
Suppose $W=[w_1,\dots,w_n]$ satisfies Definition \ref{def:dualcoreset}. By Lemma \ref{lem:boundedness},  we have $\lambda_*(\theta),\tilde \lambda_*(\theta)\in[\kappa(\theta),\tau(\theta)]$, which implies that 
\begin{eqnarray}
\tilde H(\theta,\lambda_*(\theta))\in (1\pm\epsilon)H(\theta,\lambda_*(\theta)).
\end{eqnarray}
Then for any fixed $\theta\in\Theta$, we have both
\begin{eqnarray}
\label{eq:uppersuf}
\lambda_*\sigma^p+H(\theta,\lambda_*)\geq \frac{1}{1+\epsilon}(\lambda_*\sigma^p+\tilde H(\theta,\lambda_*))\geq \frac{1}{1+\epsilon}(\tilde \lambda_*\sigma^p+\tilde H(\theta,\tilde\lambda_*))
\end{eqnarray}
and 
\begin{eqnarray}
\label{eq:lowersuf}
\lambda_*\sigma^p+H(\theta,\lambda_*)\leq\tilde\lambda_*\sigma^p+H(\theta,\tilde\lambda_*)\leq \frac{1}{1-\epsilon}(\tilde \lambda_*\sigma^p+\tilde H(\theta,\tilde\lambda_*)).
\end{eqnarray}

From  proposition \ref{pro:duality} we have $R_{\sigma,p}(\theta)=H(\theta,\lambda_*)+\lambda_*\sigma^p$ and $\tilde R_{\sigma,p}(\theta)=\tilde H(\theta,\tilde\lambda_*)+\tilde\lambda_*\sigma^p$. Together with (\ref{eq:uppersuf}) and (\ref{eq:lowersuf}),
they imply  
\begin{eqnarray}
\tilde R_{\sigma,p}(\theta)\in (1\pm\epsilon)R_{\sigma,p}(\theta).
\end{eqnarray}
Therefore, $W$ is a qualified coreset satisfying Definition \ref{def:coreset}.
\end{proof}
\subsection{Proof of Claim \ref{cla:beta-continuous}}
\label{app:beta}
\begin{proof}
Denote $z_1=\arg\max\limits_{z\in\Xi}\{\ell(\theta_1,z)-\lambda \mathtt{d}^p(z,\xi)\}$ and $z_2=\arg\max\limits_{z\in\Xi}\{\ell(\theta_2,z)-\lambda \mathtt{d}^p(z,\xi)\}$. Then $h(\theta_1,\lambda,\xi)=\ell(\theta_1,z_1)-\lambda \mathtt{d}^p(z_1,\xi)$ and $h(\theta_2,\lambda,\xi)=\ell(\theta_2,z_2)-\lambda \mathtt{d}^p(z_2,\xi)$.

By the definitions of $z_1$ and $z_2$, we have 
\begin{equation}
\label{eq:maximization}
\left.\begin{array}{l}
\ell(\theta_1,z_1)-\lambda \mathtt{d}^p(z_1,\xi) \geq \ell(\theta_1,z_2)-\lambda \mathtt{d}^p(z_2,\xi); \\
\ell(\theta_2,z_1)-\lambda \mathtt{d}^p(z_1,\xi) \leq \ell(\theta_2,z_2)-\lambda \mathtt{d}^p(z_2,\xi).
\end{array}\right\}
\end{equation}
By Assumption \ref{ass:snb} \romannumeral2, we have
\begin{equation}
\label{eq:beta_continuous}
\left.\begin{array}{l}
|\ell(\theta_1,z_1)- \ell(\theta_2,z_1)| \leq L\|\theta_1-\theta_2\|_2 ; \\
|\ell(\theta_1,z_2)- \ell(\theta_2,z_2)| \leq L\|\theta_1-\theta_2\|_2.
\end{array}\right\}
\end{equation}
Combining (\ref{eq:beta_continuous}) and (\ref{eq:maximization}), we have 
\begin{equation}
\label{eq:approximation}
\left.\begin{array}{l}
h(\theta_1,\lambda,\xi)-L\|\theta_1-\theta_2\|_2\overset{(\ref{eq:beta_continuous})}{\leq} \ell(\theta_2,z_1)-\lambda\mathrm{d}^p(z_1,\xi)\overset{(\ref{eq:maximization})}{\leq} h(\theta_2,\lambda,\xi); \\
 h(\theta_2,\lambda,\xi)-L\|\theta_1-\theta_2\|_2\overset{(\ref{eq:beta_continuous})}{\leq}\ell(\theta_1,z_2)-\lambda\mathrm{d}^p(z_2,\xi)\overset{(\ref{eq:maximization})}{\leq} h(\theta_1,\lambda,\xi).
\end{array}\right\}
\end{equation}
That is , $$|h(\theta_1,\lambda,\xi)-h(\theta_2,\lambda,\xi)|\leq L\|\theta_1-\theta_2\|_2 $$
\end{proof}
\subsection{Proof of Claim \ref{cla:lambda-continuous}}
\label{app:lambda}
\begin{proof}
From \cite[lemma 3 (\romannumeral2)]{gao2016distributionally}, we know that $h(\theta,\lambda,\xi)$ is convex and non-increasing in $\lambda$. 
Define 
$$\underline{D}(\theta,\lambda,\xi):=\liminf _{\delta \downarrow 0}\left\{\mathtt{d}(\xi, \zeta): \ell(\theta,\zeta)-\lambda\mathtt{d}^p(\xi,\zeta) \geq h(\theta,\lambda,\xi)-\delta\right\}.$$ Further, \citet[lemma 3 (\romannumeral4)]{gao2016distributionally} showed that $\underline{D}^p(\theta,\lambda,\xi)$ is a subderivative on $\lambda$ for 
 $h(\theta,\lambda,\xi)$. Therefore, from the convexity of $h_i(\theta,\cdot)$, we know 
 \begin{eqnarray}
 |h_i(\theta,\lambda)-h_i(\theta,\lambda')|\leq \max\{\underline D^p(\theta,\lambda,\xi_i),\underline D^p(\theta,\lambda',\xi_i)\}|\lambda-\lambda'|, \forall \lambda,\lambda'\geq\kappa(\theta).
 \end{eqnarray}
 
From Assumption \ref{ass:bgr} (\romannumeral1), we know that $\ell(\theta,\zeta)-\lambda\mathtt{d}^p(\xi,\zeta)$ is continuous in $\zeta$. 

If $\{\zeta\in\Xi\colon\ell(\theta,\zeta)-\lambda \mathtt{d}^p(\zeta,\xi_i)=h_i(\theta,\lambda)\}=\emptyset$,  by the continuity of $\ell(\theta,\cdot)-\lambda\mathtt{d}^p(\xi,\cdot)$ and the  mathematical analysis, we have $\underline D(\theta,\lambda,\xi_i)=\infty$. We set $r_i(\theta,\lambda)=\infty$ in this case.

If the set $\{\zeta\in\Xi\colon\ell(\theta,\zeta)-\lambda \mathtt{d}^p(\zeta,\xi_i)=h_i(\theta,\lambda)\}$ is non-empty, by the definition of $r_i(\theta,\lambda)$ and $\underline D(\theta,\lambda,\xi_i)$, we know that $r_i(\theta,\lambda)\geq \underline D(\theta,\lambda,\xi_i)$, which completes the proof.
 
 

 \end{proof}
 \subsection{Proof of Theorem 2}
 We show the proof of Theorem \ref{the:wdrocoreset} below.
Based on the continuity of $h_i(\cdot,\cdot)$ and the Hoeffding's inequality~\citep{hoeffding1994probability}, for a fixed couple $(\theta,\lambda)$, we provide an upper bound on the sample complexity first. The bound ensures that the estimation for each cell $C_{ij}$ has a bounded deviation with  high probability.
\begin{lemma}
\label{lem:eachcell}
Let $\delta$ be a given positive number. We fix a couple $(\theta,\lambda)\in\mathbb{B}(\theta_{\mathtt{anc}},l_{\mathtt{p}})\times[\kappa(\theta),\tau(\theta)]$ and take a uniform sample $Q_{ij}$ from $C_{ij}$ with the sample size
\begin{eqnarray}
|Q_{ij}|=O\left((2^jB-\mu_i\cdot2^{i-1}A+2Ll_{\mathtt{p}}+2 Rl_{\mathtt{d}})^2\delta^{-2}\log\frac{1}{\eta}\right).
\end{eqnarray}
Then, we have the probability 
\begin{eqnarray}
\label{eq:concentration}
\mathtt{Prob}\left[\left|\frac{1}{|Q_{ij}|}\sum\limits_{\xi_k\in Q_{ij}}h_k(\theta,\lambda)-\frac{1}{|C_{ij}|}\sum\limits_{\xi_k\in C_{ij}}h_k(\theta,\lambda)\right|\geq \delta\right]\leq \eta.
\end{eqnarray}
\end{lemma}
\begin{proof}
For any fixed $0\leq i,j\leq N$ , we regard $h_k(\theta,\lambda)$ as an independent random variable for each $\xi_k\in Q_{ij}$. 
We consider the following two cases:  (\romannumeral 1) $\kappa(\theta_{\mathtt{anc}})\leq\kappa(\theta)$ and (\romannumeral2) $\kappa(\theta_{\mathtt{anc}})>\kappa(\theta)$.

For case (\romannumeral1), we have $\lambda\geq\kappa(\theta)\geq\kappa(\theta_{\mathtt{anc}})$ and $\lambda_{\mathtt{anc}}\geq\kappa(\theta_{\mathtt{anc}})$. Together with Claim \ref{cla:lambda-continuous} and (\ref{eq:cell}), we have 
\begin{eqnarray}
\mu_i\cdot 2^{i-1}A-Rl_{\mathtt{d}}\leq h_k(\theta_{\mathtt{anc}},\lambda)\leq 2^jB+Rl_{\mathtt{d}}.
\end{eqnarray}

By using Claim \ref{cla:beta-continuous}, we further obtain the following upper and lower bounds for $h_k(\theta,\lambda)$: 
\begin{equation}
\label{eq:cellbound}
\left.\begin{array}{l}
h_{k}(\theta,\lambda) \leq \quad ~~~~2^{j} B+Rl_{\mathtt{d}}+L l_{\mathtt{p}}; \\
h_{k}(\theta,\lambda) \geq \mu_i 2^{i-1} A-Rl_{\mathtt{d}}-L l_{\mathtt{p}}.
\end{array}\right\}
\end{equation}

For case (\romannumeral2), we have $\lambda_{\mathtt{anc}}\geq\kappa(\theta_{\mathtt{anc}})>\kappa(\theta)$ and $\lambda\geq\kappa(\theta)$. We apply Claim \ref{cla:beta-continuous} and  have 
\begin{eqnarray}
\mu_i\cdot 2^{i-1}A-Ll_{\mathtt{p}}\leq h_k(\theta,\lambda_{\mathtt{anc}})\leq 2^jB+Ll_{\mathtt{p}}.
\end{eqnarray}

Together with Claim \ref{cla:lambda-continuous} we can achieve the same lower and upper bounds as (\ref{eq:cellbound}). 

Let the sample size $|Q_{ij}|=O((2^jB-\mu_i2^{i-1}A+2Ll_{\mathtt{p}}+2Rl_{\mathtt{d}})^2\delta^{-2}\log\frac{1}{\eta})$. Through the Hoeffding's inequality ~\citep{hoeffding1994probability}, we have
$$\mathbb{P}\left[\left|\frac{1}{|Q_{ij}|}\sum\limits_{\xi_k\in Q_{ij}}h_k(\theta,\lambda)-\frac{1}{|C_{ij}|}\sum\limits_{\xi_k\in C_{ij}}h_k(\theta,\lambda)\right|\geq \delta\right]\leq \eta.$$
\end{proof}
We aggregate the deviations from all the cells to obtain a total error on the coreset. 
 To guarantee the approximation in (\ref{eq:dualcoreset}), we need to design a sufficiently small value of the deviation $\delta$ for each cell $C_{ij}$ based on the grid partition structure. 
\begin{lemma}
\label{lem:delta}
In Lemma \ref{lem:eachcell}, we set the deviation  $\delta=\epsilon_1 (2^{j-1}+2^{i-1})A$ ~for $0\leq i,j\leq N$. Then we have 
\begin{eqnarray}
\label{eq:fixapproximation}
\mathtt{Prob}\left[|\tilde H(\theta,\lambda)-H(\theta,\lambda)|\leq 3\epsilon_1 H(\theta_{\mathtt{anc}},\lambda_{\mathtt{anc}})\right]\geq 1-(N+1)^2\eta.
\end{eqnarray}
\end{lemma}

\begin{proof}
Based on Lemma \ref{lem:eachcell}, we have 
\begin{eqnarray}
\left|\frac{|C_{ij}|}{|Q_{ij}|}\sum\limits_{\xi_k\in Q_{ij}}h_k(\theta,\lambda)-\sum\limits_{\xi_k\in C_{ij}}h_k(\theta,\lambda)\right|\leq |C_{ij}|\cdot \epsilon_1 (2^{j-1}+2^{i-1})A
\end{eqnarray}
with probability at least $1-\eta$. 
Through taking a union bound over all the cells, with probability at least $1-(N+1)^2\eta$, we have
\begin{eqnarray}
n|\tilde H(\theta,\lambda)-H(\theta,\lambda)|&=&|\sum\limits_{i,j}\sum\limits_{\xi_k\in Q_{ij}}\frac{|C_{ij}|}{|Q_{ij}|}h_k(\theta,\lambda)-\sum\limits_{i,j}\sum\limits_{\xi_k\in C_{ij}}h_k(\theta,\lambda)| \nonumber \\
&\leq& \sum\limits_{i,j}|C_{ij}|\epsilon_1 (2^{j-1}+2^{i-1})A \nonumber\\
&\leq& \sum\limits_{i,j}|C_{ij}|\epsilon_1 (2^{j-1}+2^{i-1})H(\theta_{\mathtt{anc}},\lambda_{\mathtt{anc}}). \label{eq:deviation}
\end{eqnarray}
We also need the following claim to proceed our proof.
\begin{claim}
\label{cla:delta}
$\sum\limits_{i,j}|C_{ij}|2^i\leq 3n$ and $\sum\limits_{i,j}|C_{ij}|2^j\leq 3n$
\end{claim}

Based on Claim \ref{cla:delta}, we can rewrite (\ref{eq:deviation}) as 
$$n|\tilde H(\theta,\lambda)-H(\theta,\lambda)|\leq 3n\epsilon_1H(\theta_{\mathtt{anc}},\lambda_{\mathtt{anc}}).$$
So the statement of Lemma~\ref{lem:delta} is true. 
\end{proof}

\begin{proof}\textbf{(of Claim~\ref{cla:delta})}
By the definition of $C_{ij}$, we have
$$\begin{array}{ll}2^{i} A=A, & \text { if } i=0; \\ 2^{i} A \leq 2 a_{k}\left(\theta_{\mathtt{anc}}\right), \forall\xi_k \in C_{ij}, & \text { if } i \geq 1.\end{array}$$
So we have $2^iA\leq A+2a_k(\theta_{\mathtt{anc}},\lambda_{\mathtt{anc}})$ for all $0\leq i\leq N$ and $\xi_k\in C_{ij}$. Overall, we have

$$
\begin{aligned}
\sum_{i,j=0}^{N}\left|C_{ij}\right| 2^{i} A &=\sum_{i,j=0}^{N} \sum_{\xi_k \in C_{ij}} 2^{i} A \\
& \leq \sum_{i,j=0}^{N} \sum_{\xi_k \in C_{ij}}\left(2 a_{k}\left(\theta_{\mathtt{anc}},\lambda_{\mathtt{anc}}\right)+A\right) \\
&=2 n A+n A=3 n A.
\end{aligned}
$$
Thus $\sum\limits_{i,j}|C_{ij}|2^i\leq 3n$, and we can 
prove $\sum\limits_{i,j}|C_{ij}|2^j\leq 3n$ via the same manner.
\end{proof}

\section{Omitted Discussions}
\subsection{Discussions on SVM in the Hypercube}
We discuss more details for \textbf{SVM in the hypercube}. 
Suppose $\mathbb{X}=[0,l]^d$  is a $d$-dimensional hypercube and $p=1$, then by \citep{shafieezadeh2019regularization}  and the strong duality of the linear programming, we know that the \textsf{WDRO} of SVM is equivalent to  
\begin{equation}
\label{eq:hypercubeWDRO}
\begin{array}{cll}
\inf\limits_{\theta, \lambda, s_{i},p_{i}^{+}, p_{i}^{-},\atop z_i^+,z_i^-} & \lambda \sigma+\frac{1}{n} \sum_{i=1}^{n} s_{i} & \\
\text {s.t. } & 1+l\cdot e^\top z_i^++x_i^\top p_i^+ \leq s_{i} &  \\
& 1+l\cdot e^\top z_i^-+x_i^\top p_i^--\gamma \lambda \leq s_{i} & \\
& -y_i\theta-p_i^+ \leq z_i^+, \vec{0}\leq z_i^+ & \\
& y_i\theta-p_i^-\leq z_i^-, \vec{0}\leq z_i^- & \\
& \left\|p_{i}^{+}\right\|_{*} \leq \lambda,\left\|p_{i}^{-}\right\|_{*} \leq \lambda, 0\leq s_i & i \in[n]
\end{array}
\end{equation}
where $e=[1,\dots,1]\in\mathbb{R}^d$. 
Hence $h_i(\theta,\lambda)$ is equivalent to 
\begin{equation}
\label{eq:hypercubeh_i}
\begin{array}{cll}
\inf\limits_{p_{i}^{+}, p_{i}^{-},\atop z_i^+,z_i^-} & \max\{0,1+l\cdot e^\top z_i^++x_i^\top p_i^+,1+l\cdot e^\top z_i^-+x_i^\top p_i^--\gamma\lambda\} & \\
\text {s.t. } & -y_i\theta-p_i^+ \leq z_i^+, \vec{0}\leq z_i^+ & \\
& y_i\theta-p_i^-\leq z_i^-, \vec{0}\leq z_i^- & \\
& \left\|p_{i}^{+}\right\|_{*} \leq \lambda,\left\|p_{i}^{-}\right\|_{*} \leq \lambda & i \in[n].
\end{array}
\end{equation}
 
 In this task we have 
\begin{itemize}
    \item $\kappa\equiv0$, $\mathtt{C}(\theta)=\|\theta\|_*$ and $R\leq  \gamma+l\cdot d^{\frac{1}{2}}$;
    \item $h_i(\theta,\lambda)$ is the optimal value of a constrained convex programming in (\ref{eq:hypercubeh_i});
    \item For any $z_i^+,z_i^-,p_i^+,p_i^-$ satisfying the constraints in (\ref{eq:hypercubeh_i}), $\max\{0,1+l\cdot e^\top z_i^++x_i^\top p_i^+,1+l\cdot e^\top z_i^-+x_i^\top p_i^--\gamma \lambda\}$ is an upper bound for $h_i(\theta,\lambda)$ and thus can be viewed as $b_i(\theta,\lambda)$. Here we propose a simple strategy for determining the values for these variables. 
    
    If $\|\theta\|_*>\lambda$, set $$p_i^+=-\frac{\lambda y_i\theta}{\|\theta\|_*},p_i^-=\frac{\lambda y_i\theta}{\|\theta\|_*},\atop z_i^+=\max\{(-y_i+\frac{\lambda y_i}{\|\theta\|_*})\theta,\vec{0}\},z_i^-=\max\{(y_i-\frac{\lambda y_i}{\|\theta\|_*})\theta,\vec{0}\};$$
    otherwise, set
    $$p_i^+=- y_i\theta,p_i^-= y_i\theta,\atop z_i^+=\vec{0},z_i^-=\vec{0}.$$
    \item $a_i(\theta,\lambda)=\ell(\theta,\xi_i)$.
\end{itemize}

\subsection{Discussions on Hyperparameters}
\subsubsection{Wasserstein ball radius $\sigma$}
Recall that we let $I=[\kappa(\theta),\tau(\theta)]$ in the dual coreset definition \ref{def:dualcoreset}, where $\tau(\theta,\sigma)=\mathtt{C}(\theta)\cdot\left(2^{p-1}+\frac{1+2^{p-1}\rho^p}{\sigma^p}\right)$ (see Lemma \ref{lem:boundedness}). Therefore our dual coreset is actually associated with the $\sigma$, the radius of the Wasserstein ball. We can easily observe that our dual coreset is ``monotonic'' on hyperparameter $\sigma$ and we summarize this ``monotonic'' property in the following corollary.
\begin{corollary}[Monotonic property]
\label{cor:monotonic}
Suppose $0<\sigma_1<\sigma_2$. Suppose $W$ is a dual $\epsilon$-coreset satisfying the definition \ref{def:dualcoreset} with $I(\theta,\sigma_1)=[\kappa(\theta),\tau(\theta,\sigma_1)]$. Then we have that $W$ is also a dual $\epsilon$-coreset satisfying the definition \ref{def:dualcoreset} with $I(\theta,\sigma_2)=[\kappa(\theta),\tau(\theta,\sigma_2)]$.
\end{corollary}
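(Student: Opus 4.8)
The plan is to reduce the statement to a simple monotonicity observation on the upper endpoint $\tau(\theta,\sigma)$ of the query interval, combined with the fact that the dual coreset guarantee in Definition~\ref{def:dualcoreset} is a condition quantified pointwise over $\lambda$. First I would inspect the explicit formula $\tau(\theta,\sigma)=\mathtt{C}(\theta)\left(2^{p-1}+\frac{1+2^{p-1}\rho^p}{\sigma^p}\right)$ supplied by Lemma~\ref{lem:boundedness}. Since $p\geq 1$ and the numerator $1+2^{p-1}\rho^p$ is a positive constant independent of $\sigma$, the fraction $\frac{1+2^{p-1}\rho^p}{\sigma^p}$ is strictly decreasing in $\sigma$; hence $\tau(\theta,\cdot)$ is strictly decreasing in $\sigma$ for every fixed $\theta\in\Theta$.

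Second, from the hypothesis $0<\sigma_1<\sigma_2$ I would immediately deduce $\tau(\theta,\sigma_2)<\tau(\theta,\sigma_1)$ for all $\theta\in\Theta$. Crucially, the left endpoint $\kappa(\theta)$ of the query interval does not depend on $\sigma$, so this comparison of upper endpoints yields the interval inclusion $I(\theta,\sigma_2)=[\kappa(\theta),\tau(\theta,\sigma_2)]\subseteq[\kappa(\theta),\tau(\theta,\sigma_1)]=I(\theta,\sigma_1)$, again for every $\theta$.

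Finally, I would invoke Definition~\ref{def:dualcoreset} directly: asserting that $W$ is a dual $\epsilon$-coreset with respect to $I(\theta,\sigma_1)$ means precisely that $\tilde H(\theta,\lambda)\in(1\pm\epsilon)H(\theta,\lambda)$ holds for every $\theta\in\Theta$ and every $\lambda\in I(\theta,\sigma_1)$. Since $I(\theta,\sigma_2)\subseteq I(\theta,\sigma_1)$, the same approximation guarantee holds \emph{a fortiori} for all $\lambda\in I(\theta,\sigma_2)$, so $W$ also satisfies Definition~\ref{def:dualcoreset} with the interval $I(\theta,\sigma_2)$. There is essentially no technical obstacle in this corollary; the only point that must be stated carefully is that the dual coreset condition is universally quantified over $\lambda$ in the interval, so that shrinking the query region can only make the condition easier to satisfy. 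Thus the entire argument hinges on the monotonicity of $\tau(\theta,\cdot)$ in $\sigma$ and the resulting nesting of query intervals.
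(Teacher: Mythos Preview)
Your proposal is correct and follows essentially the same approach as the paper's own proof: both observe that $\tau(\theta,\sigma)$ is decreasing in $\sigma$, deduce the interval inclusion $I(\theta,\sigma_2)\subseteq I(\theta,\sigma_1)$, and conclude immediately from the pointwise-in-$\lambda$ nature of Definition~\ref{def:dualcoreset}. The paper's version is simply more terse.
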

\begin{proof}
The proof is straightforward. Noting that $\tau(\theta,\sigma)$ is decreasing on $\sigma$ over $(0,+\infty)$, we have $I(\theta,\sigma_2)\subset I(\theta,\sigma_1)$, which implies the corollary.
\end{proof}
\subsubsection{Feature-label metric parameter $\gamma$}
Recall that $\gamma$ is used for defining the feature-label metric $\mathtt{d}(\xi_i,\xi_j)=\|x_i-x_j\|+\frac{\gamma}{2}|y_i-y_j|$. We consider two extreme conditions for the value of $\gamma$ in Wasserstein distributionally robust logistic regression as an example.

\begin{itemize}
    \item The value of $\gamma$ is arbitrary close to 0. In this case, a feature $x$ can be equipped with an arbitrary label $y$, which makes the Wasserstein distributionally robust logistic regression problem trivial. As a consequence, the optimal $\theta$ is always equal to $\vec 0$ with no dependence on the training data. Therefore in this case, the coreset of WDRO can be any subset of the original training set.
    \item The value of $\gamma$ is arbitrary large. In this case, the Wasserstein distributionally robust logistic regression reduces to the logistic regression with $\|\cdot\|_*$ norm regularization \citep{shafieezadeh2019regularization}. The norm $\|\cdot\|_*$ is the dual norm of $\|\cdot\|$ in $\mathbb{R}^d$. Note that the value of the regularization term has no dependence on the training data. Therefore in this case the coreset of WDRO logistic regression is equivalent to the coreset of the standard logistic regression.
\end{itemize}



\section{Experiments}
Our experiments were conducted on a server equipped with 2.4GHZ Intel CPUs and 256GB main memory. The algorithms are implemented in Python. We use the MOSEK~\citep{mosek} to solve the tractable reformulations of \textsf{WDRO}s. 

\textbf{Compared methods}
We compare our dual coreset method \textsc{DualCore} with the uniform sampling approach (\textsc{UniSamp}) and the approach that directly runs on whole dataset (\textsc{Whole}).

\textbf{Datasets}
We test the algorithms for the SVM and  logistic regression problems on two real datasets:
\textsc{Mnist}\citep{lecun2010mnist} and \textsc{Letter}\citep{chang2011libsvm}.
To simulate the scenarios where the datasets are contaminated, we perform   poisoning attacks to the training set of \textsc{Letter}. Specifically,  we use the \textsc{Min-Max} attack from \citep{DBLP:journals/corr/abs-1811-00741} and \textsc{Alfa} attack from \citep{DBLP:journals/ijon/XiaoBNXER15}. We add the standard Gaussian noise $\mathcal{N}(0,1)$ to the training set of \textsc{Mnist} and randomly flip $10\%$ of the labels.
The dual coreset algorithm for the robust regression problem is evaluated on the real dataset \textsc{Appliances Energy}\citep{candanedo2017data}.

\textbf{Remark on hyperparameter tuning in practice.} Our Algorithm \ref{alg:dualcoreset} relies on some hyperparameters like $\epsilon$ and the Wasserstein ball radius $\sigma$ (recall that the ``anchor'' $\lambda_{\texttt{anc}}$ depends on $\sigma$). By Corollary \ref{cor:monotonic} we can see that an $\epsilon$-coreset for a given $\sigma$ will still be an $\epsilon$-coreset if    the parameter $\sigma$ increases. In particular, the larger the parameter $\sigma$, the smaller the coreset size. We can use this ``monotonic'' property to avoid frequently updating our coreset by using the simple doubling technique. For example, suppose we already have the coreset for $\sigma=\sigma_0$, and we want to tune the parameter $\sigma$ gradually within a range. When $\sigma$ increases and exceeds $2\sigma_0$, we can construct the coreset of  $2\sigma_0$; if $\sigma$ exceeds $4\sigma_0$, we can construct the coreset of $4\sigma_0$, and so on and so forth. Similarly, if $\sigma$ decreases, we can try $\sigma_0/2, \sigma_0/4, \cdots$. The similar ``monotonic'' property and doubling technique also hold for $\epsilon$.

\textbf{Results}
Let $s$ and $n$ be the coreset size and the training set size, respectively. We set $c\coloneqq\frac{s}{n}$ to indicate the compression rate and fix the parameter $\gamma=7$ for all the instances (recall that $\gamma$ is used for defining the feature-label metric $\mathtt{d}(\xi_i,\xi_j)=\|x_i-x_j\|+\frac{\gamma}{2}|y_i-y_j|$). We vary the radius $\sigma$ of the Wasserstein ball for  different tasks. The experiment of  each instance were repeated by $50$ independent trials. For the \textsf{WDRO} logistic regression and SVM problems, we report the averaged test accuracy and the standard deviation in table \ref{tab:lg_mnist_acc}, \ref{tab:lg_letermm_acc}, \ref{tab:lg_leteralfa_acc}, \ref{tab:svm_leteralfa_acc} and \ref{tab:svm_letermm_acc}, where the higher accuracy of \textsc{UniSamp}  and \textsc{DualCore}  is written in bold for each instance. 
The results suggest that our dual coreset method outperforms the uniform sampling method with a higher accuracy in most cases.
For the \textsf{WDRO} robust regression task, we report the averaged test Huber loss and the standard deviation in table \ref{tab:rr_app_acc}, where the lower loss of \textsc{UniSamp}  and \textsc{DualCore}  is written in bold for each instance. The results suggest that our dual coreset method outperforms the uniform sampling method with a lower Huber loss in most cases.
We also record the normalized CPU time (over the CPU time of \textsc{Whole})  in table \ref{tab:lg_mnist_time}, \ref{tab:lg_letermm_time},  \ref{tab:lg_leteralfa_time}, \ref{tab:svm_leteralfa_time}, \ref{tab:svm_letermm_time} and \ref{tab:rr_app_time}. 

\begin{table}[htbp]
	\centering
	\caption{Test set accuracy of the \textsf{WDRO} logistic regression on \textsc{Mnist} with $c=0.5\%,\sigma=0.3$}
	\begin{tabular}{c|c|c|c}
		\hline
		& \textsc{Whole}   & \textsc{UniSamp} & \textsc{DualCore} \\
		\hline      
 0 vs 1 & 99.76\% & 94.14$\pm$6.85\% & \textbf{94.37$\pm$8.05\%} \\
    0 vs 2 & 98.13\% & \textbf{87.1$\pm$10.1\%} & 84.98$\pm$11.28\% \\
    0 vs 3 & 98.78\% & 84.03$\pm$12.88\% & \textbf{84.38$\pm$12.57\%} \\
    0 vs 4 & 99.1\% & 87.82$\pm$12.79\% & \textbf{87.91$\pm$12.11\%} \\
    0 vs 5 & 97.52\% & \textbf{76.57$\pm$13.48\%} & 75.45$\pm$13.76\% \\
    0 vs 6 & 97.94\% & \textbf{83.17$\pm$12.55\%} & 81.05$\pm$12.84\% \\
    0 vs 7 & 99.27\% & 86.43$\pm$13.33\% & \textbf{90.78$\pm$8.8\%} \\
    0 vs 8 & 98.28\% & 85.57$\pm$12.99\% & \textbf{86.07$\pm$11.94\%} \\
    0 vs 9 & 98.53\% & \textbf{88.35$\pm$10.58\%} & 87.14$\pm$9.9\% \\
    1 vs 2 & 96.87\% & 79.85$\pm$13.47\% & \textbf{86.26$\pm$10.54\%} \\
    1 vs 3 & 97.58\% & \textbf{83.75$\pm$14.73\%} & 83.17$\pm$14.87\% \\
    1 vs 4 & 98.97\% & 87.73$\pm$12.73\% & \textbf{87.85$\pm$12\%} \\
    1 vs 5 & 98.03\% & 76.22$\pm$13.88\% & \textbf{81.33$\pm$13.31\%} \\
    1 vs 6 & 99\%  & 85.71$\pm$12.17\% & \textbf{89.32$\pm$8.2\%} \\
    1 vs 7 & 97.73\% & 84.2$\pm$14.22\% & \textbf{87.05$\pm$12.5\%} \\
    1 vs 8 & 95.86\% & 78.95$\pm$14.21\% & \textbf{79.13$\pm$12.93\%} \\
    1 vs 9 & 98.65\% & 87.38$\pm$13.33\% & \textbf{87.44$\pm$11.97\%} \\
    2 vs 3 & 95.91\% & 75.71$\pm$12.87\% & \textbf{76.85$\pm$12.23\%} \\
    2 vs 4 & 97.7\% & 78.12$\pm$14.26\% & \textbf{79.52$\pm$14.36\%} \\
    2 vs 5 & 96.8\% & \textbf{75.53$\pm$14.6\%} & 73.93$\pm$14.32\% \\
    2 vs 6 & 96.19\% & 72.44$\pm$12.3\% & \textbf{75.23$\pm$11.11\%} \\
    2 vs 7 & 96.39\% & 78.54$\pm$12.56\% & \textbf{86.15$\pm$10.48\%} \\
    2 vs 8 & 95.96\% & 69.08$\pm$12.62\% & \textbf{69.18$\pm$13.9\%} \\
    2 vs 9 & 97.25\% & \textbf{82.43$\pm$12.68\%} & 81.92$\pm$12.8\% \\
    3 vs 4 & 98.54\% & 77.33$\pm$16.52\% & \textbf{85.86$\pm$13.62\%} \\
    3 vs 5 & 93.4\% & 62.31$\pm$10.81\% & \textbf{66.12$\pm$10.17\%} \\
    3 vs 6 & 98.27\% & 83.06$\pm$14.52\% & \textbf{88.39$\pm$9\%} \\
    3 vs 7 & 97.41\% & 81.94$\pm$10.87\% & \textbf{81.99$\pm$14.57\%} \\
    3 vs 8 & 93.84\% & 65.56$\pm$11.81\% & \textbf{69.62$\pm$11.83\%} \\
    3 vs 9 & 96.95\% & 78.27$\pm$15.1\% & \textbf{79.13$\pm$14.22\%} \\
    4 vs 5 & 97.67\% & 70.83$\pm$13.86\% & \textbf{75.95$\pm$13.76\%} \\
    4 vs 6 & 98.2\% & 72.75$\pm$14.89\% & \textbf{72.98$\pm$14.89\%} \\
    4 vs 7 & 97.44\% & 72.55$\pm$14.45\% & \textbf{75.63$\pm$14.79\%} \\
    4 vs 8 & 98.17\% & 74.95$\pm$14.93\% & \textbf{77.88$\pm$14.33\%} \\
    4 vs 9 & 93.88\% & 59.1$\pm$9.01\% & \textbf{62.03$\pm$8.48\%} \\
    5 vs 6 & 96.88\% & 73.08$\pm$14.19\% & \textbf{77.26$\pm$13.35\%} \\
    5 vs 7 & 98.65\% & 74.25$\pm$15.56\% & \textbf{78.55$\pm$12.74\%} \\
    5 vs 8 & 93.8\% & \textbf{66.4$\pm$11.22\%} & 65.91$\pm$11.5\% \\
    5 vs 9 & 97.39\% & 70.1$\pm$14.81\% & \textbf{72.16$\pm$13.33\%} \\
    6 vs 7 & 99.49\% & 84.12$\pm$12.65\% & \textbf{87.87$\pm$11.64\%} \\
    6 vs 8 & 97.91\% & \textbf{78.85$\pm$13.94\%} & 76.58$\pm$14.58\% \\
    6 vs 9 & 99.48\% & \textbf{79.76$\pm$16.29\%} & 78.97$\pm$15.41\% \\
    7 vs 8 & 97.75\% & 79.98$\pm$14.22\% & \textbf{80.97$\pm$13.48\%} \\
    7 vs 9 & 93.14\% & 66.09$\pm$11.99\% & \textbf{67.68$\pm$12.43\%} \\
    8 vs 9 & 96.13\% & 72.12$\pm$13.57\% & \textbf{75.52$\pm$13.5\%} \\
		\hline
	\end{tabular}%
	\label{tab:lg_mnist_acc}%
\end{table}%

\begin{table}[htbp]
  \centering
  \caption{Normalized CPU time of the \textsf{WDRO} logistic regression on \textsc{Mnist} with $c=0.5\%,\sigma=0.3$}
    \begin{tabular}{c|c|c}
    		\hline
            & \textsc{UniSamp} & \textsc{DualCore} \\
          		\hline
  
    0 vs 1 & 0.01  & 0.021 \\
    0 vs 2 & 0.011 & 0.024 \\
    0 vs 3 & 0.013 & 0.029 \\
    0 vs 4 & 0.011 & 0.024 \\
    0 vs 5 & 0.008 & 0.017 \\
    0 vs 6 & 0.007 & 0.017 \\
    0 vs 7 & 0.008 & 0.018 \\
    0 vs 8 & 0.009 & 0.019 \\
    0 vs 9 & 0.011 & 0.024 \\
    1 vs 2 & 0.009 & 0.021 \\
    1 vs 3 & 0.008 & 0.018 \\
    1 vs 4 & 0.008 & 0.017 \\
    1 vs 5 & 0.007 & 0.017 \\
    1 vs 6 & 0.009 & 0.019 \\
    1 vs 7 & 0.011 & 0.023 \\
    1 vs 8 & 0.007 & 0.016 \\
    1 vs 9 & 0.008 & 0.019 \\
    2 vs 3 & 0.01  & 0.021 \\
    2 vs 4 & 0.007 & 0.017 \\
    2 vs 5 & 0.008 & 0.021 \\
    2 vs 6 & 0.01  & 0.023 \\
    2 vs 7 & 0.011 & 0.024 \\
    2 vs 8 & 0.011 & 0.024 \\
    2 vs 9 & 0.008 & 0.019 \\
    3 vs 4 & 0.008 & 0.018 \\
    3 vs 5 & 0.005 & 0.013 \\
    3 vs 6 & 0.007 & 0.016 \\
    3 vs 7 & 0.009 & 0.02 \\
    3 vs 8 & 0.01  & 0.023 \\
    3 vs 9 & 0.01  & 0.021 \\
    4 vs 5 & 0.01  & 0.021 \\
    4 vs 6 & 0.009 & 0.02 \\
    4 vs 7 & 0.007 & 0.016 \\
    4 vs 8 & 0.008 & 0.018 \\
    4 vs 9 & 0.007 & 0.017 \\
    5 vs 6 & 0.01  & 0.021 \\
    5 vs 7 & 0.011 & 0.025 \\
    5 vs 8 & 0.01  & 0.023 \\
    5 vs 9 & 0.012 & 0.025 \\
    6 vs 7 & 0.008 & 0.018 \\
    6 vs 8 & 0.009 & 0.021 \\
    6 vs 9 & 0.007 & 0.015 \\
    7 vs 8 & 0.01  & 0.021 \\
    7 vs 9 & 0.009 & 0.019 \\
    8 vs 9 & 0.008 & 0.018 \\
    		\hline
    \end{tabular}%
  \label{tab:lg_mnist_time}%
\end{table}%

\begin{table}[htbp]
  \centering
  \caption{Test set accuracy of the \textsf{WDRO} logistic regression on \textsc{Letter} under \textsc{Min-Max} attack with $\sigma=0.3$}
    \begin{tabular}{c|c|c}
    \hline
    $c$ & \textsc{UniSamp} & \textsc{DualCore} \\
      \hline
    1\%  & 79.15$\pm$14.32\% & \textbf{83.86$\pm$9.67\%} \\
    2\%  & 87.66$\pm$8.74\% & \textbf{87.81$\pm$7.03\%} \\
    3\%  & 89.32$\pm$4.44\% & \textbf{89.54$\pm$7.89\%} \\
    4\%  & 89.71$\pm$5.28\% & \textbf{90.1$\pm$5.06\%} \\
    5\%  & 90.52$\pm$4.29\% & \textbf{91.49$\pm$4.1\%} \\
    6\%  & 91.55$\pm$3.63\% & \textbf{92.36$\pm$2.56\%} \\
    7\%  & 91.19$\pm$3.68\% & \textbf{91.67$\pm$2.92\%} \\
    8\%  & \textbf{92.51$\pm$2.82\%} & 91.59$\pm$3.01\% \\
    9\%  & \textbf{92.33$\pm$2.75\%} & 91.57$\pm$2.56\% \\
    10\%   & 91.86$\pm$2.79\% & \textbf{92.57$\pm$2.08}\% \\
      \hline
    \end{tabular}%
  \label{tab:lg_letermm_acc}%
\end{table}%

\begin{table}[htbp]
  \centering
  \caption{Normalized CPU time of the \textsf{WDRO} logistic regression on the \textsc{Letter}  under \textsc{Min-Max} attack with $\sigma=0.3$}
    \begin{tabular}{c|c|c}
     \hline
    $c$ & \textsc{UniSamp} & \textsc{DualCore} \\
     \hline
    1\%  & 0.04  & 0.103 \\
    2\%  & 0.053 & 0.13 \\
    3\%  & 0.062 & 0.151 \\
    4\%  & 0.084 & 0.185 \\
    5\%  & 0.105 & 0.237 \\
    6\%  & 0.118 & 0.257 \\
    7\%  & 0.143 & 0.329 \\
    8\%  & 0.164 & 0.346 \\
    9\%  & 0.132 & 0.278 \\
    10\%   & 0.121 & 0.275 \\
     \hline
    \end{tabular}%
  \label{tab:lg_letermm_time}%
\end{table}%

\begin{table}[htbp]
  \centering
  \caption{Test set accuracy of the \textsf{WDRO} logistic regression on \textsc{Letter} under \textsc{Alfa} attack with $\sigma=0.3$}
    \begin{tabular}{c|c|c}
       \hline
    $c$ & \textsc{UniSamp} & \textsc{DualCore} \\
       \hline
    1\%  & 78.28$\pm$12.22\% & \textbf{79.86$\pm$13.56\%} \\
    2\%  & 79.69$\pm$11.74\% & \textbf{83.17$\pm$10.4\%} \\
    3\%  & 81.98$\pm$13.56\% & \textbf{84.89$\pm$10.4\%} \\
    4\%  & 87.06$\pm$8.89\% & \textbf{87.63$\pm$6.38\%} \\
    5\%  & 86.14$\pm$9.29\% & \textbf{87.16$\pm$8.53\%} \\
    6\%  & 86.9$\pm$7.44\% & \textbf{88.59$\pm$6.45\%} \\
    7\%  & \textbf{87.9$\pm$7.08\%} & 86.86$\pm$6.85\% \\
    8\%  & 88.23$\pm$5.22\% & \textbf{88.39$\pm$4.52\%} \\
    9\%  & 88.18$\pm$5.67\% & \textbf{88.63$\pm$4.43\%} \\
    10\%   & \textbf{89.33$\pm$6.46\%} & 87.44$\pm$5.05\% \\
       \hline
    \end{tabular}%
  \label{tab:lg_leteralfa_acc}%
\end{table}%

\begin{table}[htbp]
  \centering
  \caption{Normalized CPU time of the \textsf{WDRO} logistic regression on \textsc{Letter}  under \textsc{Alfa} attack with $\sigma=0.3$}
    \begin{tabular}{c|c|c}
         \hline
    $c$ & \textsc{UniSamp} & \textsc{DualCore} \\
         \hline
    1\%  & 0.03  & 0.067 \\
    2\%  & 0.033 & 0.076 \\
    3\%  & 0.041 & 0.095 \\
    4\%  & 0.045 & 0.109 \\
    5\%  & 0.058 & 0.122 \\
    6\%  & 0.06  & 0.137 \\
    7\%  & 0.072 & 0.152 \\
    8\%  & 0.092 & 0.201 \\
    9\%  & 0.125 & 0.25 \\
    10\%   & 0.099 & 0.217 \\
         \hline
    \end{tabular}%
  \label{tab:lg_leteralfa_time}%
\end{table}%

\begin{table}[htbp]
  \centering
  \caption{Test set accuracy of the \textsf{WDRO} SVM on \textsc{Letter} under \textsc{Alfa} attack with $\sigma=0.1$}
    \begin{tabular}{c|c|c}
    \hline
    $c$ & \textsc{UniSamp} & \textsc{DualCore} \\
    \hline
    1\%   & \textbf{80.29$\pm$13.98\%} & 79.23$\pm$13\% \\
    2\%   & 83.86$\pm$13.7\% & \textbf{87.8$\pm$11.74\%} \\
    3\%   & 89.95$\pm$11.23\% & \textbf{92.78$\pm$7.61\%} \\
    4\%   & 91.47$\pm$9.37\% & \textbf{92.46$\pm$6.54\%} \\
    5\%   & 90.89$\pm$9.12\% & \textbf{92.36$\pm$9.05\%} \\
    6\%   & \textbf{95.5$\pm$4.35\%} & 94.94$\pm$5.49\% \\
    7\%   & 94.01$\pm$6.85\% & \textbf{95.99$\pm$2.65\%} \\
    8\%   & 95.61$\pm$5.7\% & \textbf{96.1$\pm$2.43\%} \\
    9\%   & 94.91$\pm$6.09\% & \textbf{96.43$\pm$2.23\%} \\
    10\%  & 95.27$\pm$5.72\% & \textbf{95.97$\pm$3.76\%} \\
    \hline
    \end{tabular}%
  \label{tab:svm_leteralfa_acc}%
\end{table}%

\begin{table}[htbp]
  \centering
  \caption{Normalized CPU time of the \textsf{WDRO} SVM on \textsc{Letter}  under \textsc{Alfa} attack with $\sigma=0.1$}
    \begin{tabular}{c|c|c}
    \hline
    $c$ & \textsc{UniSamp} & \textsc{DualCore} \\
      \hline
    1\%   & 0.137 & 0.554 \\
    2\%   & 0.137 & 0.561 \\
    3\%   & 0.121 & 0.546 \\
    4\%   & 0.125 & 0.501 \\
    5\%   & 0.133 & 0.503 \\
    6\%   & 0.168 & 0.515 \\
    7\%   & 0.228 & 0.695 \\
    8\%   & 0.349 & 0.955 \\
    9\%   & 0.327 & 0.984 \\
    10\%  & 0.163 & 0.556 \\
      \hline
    \end{tabular}%
  \label{tab:svm_leteralfa_time}%
\end{table}%

\begin{table}[htbp]
  \centering
  \caption{Test set accuracy of the \textsf{WDRO} SVM on \textsc{Letter} under \textsc{Min-Max} attack with $\sigma=0.2$}
    \begin{tabular}{c|c|c}
     \hline
    $c$ & \textsc{UniSamp} & \textsc{DualCore} \\
     \hline
    1\%   & 82.02$\pm$15.31\% & \textbf{85.17$\pm$13.88\%} \\
    2\%   & 90.44$\pm$9.08\% & \textbf{93.13$\pm$2.26\%} \\
    3\%   & 90.29$\pm$10.71\% & \textbf{92.17$\pm$6.62\%} \\
    4\%   & 91.29$\pm$9.01\% & \textbf{93.7$\pm$2.26\%} \\
    5\%   & 93.55$\pm$2.43\% & \textbf{93.91$\pm$1.72\%} \\
    6\%   & \textbf{94.17$\pm$2.18\%} & 93.13$\pm$6.47\% \\
    7\%   & 92.68$\pm$7.33\% & \textbf{94.39$\pm$1.47\%} \\
    8\%   & 94.15$\pm$2.05\% & \textbf{94.2$\pm$1.36\%} \\
    9\%   & \textbf{94.26$\pm$1.61\%} & 94.04$\pm$1.28\% \\
    10\%  & 93.99$\pm$1.53\% & \textbf{94.2$\pm$1.67\%} \\
     \hline
    \end{tabular}%
  \label{tab:svm_letermm_acc}%
\end{table}%

\begin{table}[htbp]
  \centering
  \caption{Normalized CPU time of the \textsf{WDRO} SVM on \textsc{Letter}  under \textsc{Min-Max} attack with $\sigma=0.2$}
    \begin{tabular}{c|c|c}
    \hline
    $c$ & \textsc{UniSamp} & \textsc{DualCore} \\
    \hline
    1\%   & 0.118 & 0.517 \\
    2\%   & 0.122 & 0.516 \\
    3\%   & 0.11  & 0.445 \\
    4\%   & 0.118 & 0.458 \\
    5\%   & 0.159 & 0.537 \\
    6\%   & 0.245 & 0.804 \\
    7\%   & 0.33  & 0.965 \\
    8\%   & 0.297 & 0.871 \\
    9\%   & 0.159 & 0.519 \\
    10\%  & 0.218 & 0.647 \\
    \hline
    \end{tabular}%
  \label{tab:svm_letermm_time}%
\end{table}%

\begin{table}[htbp]
  \centering
  \caption{Test set Huber loss of the \textsf{WDRO} robust regression on \textsc{Appliances Energy} with $\sigma=100$}
    \begin{tabular}{c|c|c}
        \hline
    $c$ & \textsc{UniSamp} & \textsc{DualCore} \\
        \hline
    1\%   & 33.0933$\pm$1.8918 & \textbf{32.3245$\pm$1.937} \\
    2\%   & 31.4399$\pm$1.5614 & \textbf{30.7886$\pm$1.2459} \\
    3\%   & 31.3852$\pm$0.6885 & \textbf{30.5185$\pm$0.4625} \\
    4\%   & 31.5143$\pm$0.4824 & \textbf{31.0308$\pm$0.3113} \\
    5\%   & 31.036$\pm$0.507 & \textbf{30.401$\pm$0.2476} \\
    6\%   & 31.5388$\pm$0.3296 & \textbf{31.0017$\pm$0.1913} \\
    7\%   & 32.2394$\pm$0.311 & \textbf{31.7412$\pm$0.1504} \\
    8\%   & 30.225$\pm$0.2345 & \textbf{29.8135$\pm$0.1503} \\
    9\%   & 30.0463$\pm$0.2292 & \textbf{29.6167$\pm$0.1098} \\
    10\%  & 31.1906$\pm$0.2257 & \textbf{30.8201$\pm$0.107} \\
        \hline
    \end{tabular}%
  \label{tab:rr_app_acc}%
\end{table}%

\begin{table}[htbp]
  \centering
  \caption{Normalized CPU time of the \textsf{WDRO} robust regression on \textsc{Appliances Energy} with $\sigma=100$}
    \begin{tabular}{c|c|c}
    \hline
    $c$ & \textsc{UniSamp} & \textsc{DualCore} \\
     \hline
    1\%   & 0.039 & 0.145 \\
    2\%   & 0.076 & 0.224 \\
    3\%   & 0.068 & 0.192 \\
    4\%   & 0.072 & 0.207 \\
    5\%   & 0.088 & 0.236 \\
    6\%   & 0.097 & 0.244 \\
    7\%   & 0.103 & 0.259 \\
    8\%   & 0.157 & 0.348 \\
    9\%   & 0.147 & 0.299 \\
    10\%  & 0.184 & 0.374 \\
     \hline
    \end{tabular}%
  \label{tab:rr_app_time}%
\end{table}%

\end{document}